\newtheorem{condition}{Condition}
\def\R{\mathcal{R}} 
\def\V{\mathcal{V}}
\begin{document}
\title{Penalised FTRL With Time-Varying Constraints}
\author{Douglas J. Leith$^1$, George Iosifidis$^2$\\$^1$Trinity College Dublin,  Ireland $^2$TU Delft, The Netherlands}
\institute{}
\date{}                                           
\maketitle

\begin{abstract}
In this paper we extend the classical Follow-The-Regularized-Leader (FTRL) algorithm to encompass time-varying constraints, through adaptive penalization. We establish sufficient conditions for the proposed Penalized FTRL algorithm to achieve $O(\sqrt{t})$ regret and violation with respect to strong benchmark $\hat{X}^{max}_t$.  Lacking prior knowledge of the constraints, this is probably the largest benchmark set that we can reasonably hope for.  Our sufficient conditions are necessary in the sense that when they are violated there exist examples where $O(\sqrt{t})$ regret and violation is not achieved.   Compared to the best existing primal-dual algorithms, Penalized FTRL substantially extends the class of problems for which $O(\sqrt{t})$ regret and violation performance is achievable. 
\end{abstract}
\keywords{FTRL \and online convex optimisation \and constrained optimisation}   

\section{Introduction}

The introduction of online convex optimization (OCO) \cite{zinkevich} offered an effective way to tackle online learning and dynamic decision problems, with applications that range from portfolio selection, to routing optimization and ad placement, see \cite{hazan-book}. One of the seminal OCO algorithms is the Follow-The-Regularized-Leader (FTRL), which includes online gradient descent and mixture of experts as special cases. Indeed, FTRL is widely used today and has been studied in different contexts, e.g., with linear or non-linear objective functions, composite objectives, budget constraints, etc., see \cite{mcmahan-survey17}.

The general form of the FTRL update is:
\begin{align}
x_{\tau+1} \in \arg\min_{x\in X} \left\{ R_\tau(x)+\sum_{i=1}^\tau F_i(x)\right\} \label{eq:ftrl_update}
\end{align}
where action set $X\subset \mathbb{R}^n$ is bounded, function $F_i:X\rightarrow\mathbb{R}$ and regularizer $R_\tau:X\rightarrow\mathbb{R}$ is strongly convex. When the sum-loss $\sum_{i=1}^\tau F_i(x)$ is convex and $F_i(x)$ and $\big(R_i(x)-R_{i-1}(x)\big)$ are uniformly Lipschitz, the FTRL-generated sequence $\{x_\tau\}_{\tau=1}^t$ induces regret $\sum_{i=1}^t \big(F_i(x_i)-F_i(x)\big)\le O(\sqrt{t})$,  $\forall x \!\in \! X$, cf. \cite{mcmahan-survey17}.   Importantly, the set $X$ of admissible actions must be fixed and this is intrinsic to the method of proof, i.e., it is not a minor or incidental assumption.

The focus of this paper is to extend the FTRL algorithm in order to accommodate time-varying action sets, i.e., cases where at each time $\tau$ the fixed set action $X$ is replaced by set $X_\tau$ which may vary over time.  We refer to this extension to FTRL as \emph{Penalised FTRL}.

In general, it is too much to expect to be able to simultaneously achieve $O(\sqrt{t})$ regret and strict feasibility $x_{\tau}\in X_{\tau}$, $\tau=1,\dots,t$. We therefore allow limited violation of the action sets $\{X_{\tau}\}$ and instead aim to simultaneously achieve $O(\sqrt{t})$ regret and $O(\sqrt{t})$ constraint violation.  That is, defining loss function $f_\tau:D\rightarrow\mathbb{R}$ on domain $D\subset \mathbb{R}^n$ and constraint functions $g_{\tau}^{(j)}:D\rightarrow\mathbb{R}$ such that $X_\tau=\big\{x\in D: g_{\tau}^{(j)}(x)\le 0, j=1,\dots,m \big\}$ then we aim to simultaneously achieve regret and violation:
\[
\R_t=\sum_{i=1}^t \Big(f_i(x_i)-f_i(x)\Big)\le O(\sqrt{t}), \  \V_t=\sum_{j=1}^m\max\{0,\sum_{i=1}^t g_{i}^{(j)}(x_i)\}\le O(\sqrt{t})
\]
for all $x\in X^{max}_t:=\big\{x\in D: \sum_{i=1}^t g_{i}^{(j)}(x)\le 0, j=1,\dots,m \big\}$.

\emph{Importance of Using A Strong Benchmark.} We know from \cite{JMLR:v10:mannor09a} that $O(\sqrt{t})$ regret and violation with respect to benchmark set $X^{max}_t$ is not achievable for all possible sequences of constraints $\{g_{i}^{(j)}\}$.   It is therefore necessary to: \emph{(i)} change the benchmark set $X^{max}_t$ to something more restrictive; or \emph{(ii)} restrict the admissible set of constraint sequences $\{g_{i}^{(j)}\}$; or \emph{(iii)} both.   In the literature, it is common to adopt the weaker benchmark:
$$
X^{min}_t:=\left\{x\in D: g_{i}^{(j)}(x)\le 0, i=1,\dots,t, j=1,\dots,m\right\}\subset X^{max}_t
$$
i.e., to focus on actions $x$ which \emph{simultaneously satisfy every constraint at every time}. But this weak benchmark is in fact so restrictive and easy for a learning algorithm to outperform, where the achieved regret $\R_t$ is often negative in practice, and indeed $-\R_t\le O(t)$.   

One of our primary interests, therefore, is in retaining a benchmark that is close to $X^{max}_t$.   To this end, we consider the following benchmark:
$$
\hat{X}^{max}_t:=\bigg\{x\in D: \sum_{i=1}^\tau g_{i}^{(j)}(x)\le 0, \forall j\leq m, \tau\leq t\bigg\}.
$$ 

\begin{wrapfigure}[11]{r}{0.33\columnwidth}
\centering
\vspace{-0.75cm}
\includegraphics[width=0.25\columnwidth]{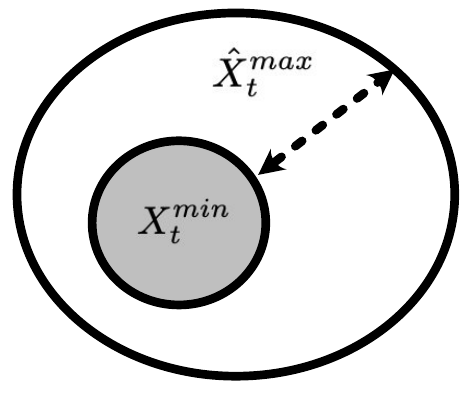}
\caption{Showing how our benchmark set $\hat{X}^{max}_t$ compares to $X^{min}_t$.}\label{fig:sets}
\end{wrapfigure}

\noindent We can see immediately that $X^{min}_t\subset \hat{X}^{max}_t$.  The set $\hat{X}^{max}_t$ requires $\sum_{i=1}^\tau g_{i}^{(j)}(x)\le 0$ to hold at every time $\tau\leq t$ rather than just at the end of the horizon $t$, and so is still smaller than $X^{max}_t$. Lacking, however, predictions or prior knowledge of the constraints $g_{i}^{(j)}$, it is probably the best we can reasonably hope for.  To illustrate the difference between $\hat{X}^{max}_t$ and $X^{min}_t$, suppose the time-varying constraint is $x\le 1/\sqrt{t}$. Then $X^{min}_t=[0, 1/\sqrt{t}]$ which tends to set $\{0\}$ for $t$ large, $X^{max}_t=D=[0,1]$ for $t\ge 1$, and $\hat{X}^{max}_t=D=[0,1]$.

\section{Related Work} \label{sec:related}

The literature on online learning with time-varying constraints focuses on the use of primal-dual algorithms (see update (\ref{eq:pm1}) in the sequel), and largely fails to obtain $O(\sqrt{t})$ regret and violation simultaneously even w.r.t. the weak $X^{min}_t$ benchmark. The standard problem setup consists of a sequence of convex cost functions $f_t:D\rightarrow\mathbb{R}$ and constraints $g_{t}^{(j)}:X\rightarrow\mathbb{R}$, $j=1,\dots,m$, where actions $x \in D\subset\mathbb{R}^n$.  The canonical algorithm performs a primal-dual gradient descent iteration, namely:
\begin{align}
\!\!\!x_{t+1} \!=\! \Pi_D\left(x_t - \eta_t (\partial f_t(x_t) \!+ \lambda_t^T\partial g_t(x_t))\right),\ %
\lambda_{t+1} \!=\! \Big[(1\!-\theta_t)\lambda_t \!+\! \mu_t g_t(x_{t+1})\Big]^+ \label{eq:algo1}
\end{align}

with step-size parameters $\eta_t$, $\mu_t $ and regularisation parameter $\theta_t$; while $\Pi_D(
\alpha)$ denotes the project of $\alpha$ onto $D$.  Commonly, the parameter $\theta_t \equiv 0$, with exceptions being \cite{JMLR:v13:mahdavi12a}, \cite{pmlr-v48-jenatton16},  and \cite{pmlr-v70-sun17a} that employ non-zero $\theta_t$. \cite{yu2017} approximate $ g_t(x_{t+1})$ in the $\lambda_{t+1}$ update by  $g_t(x_{t}) +\partial g_t(x_t)(x_{t+1}-x_t)$.

The $\R_t$ is commonly measured w.r.t. the baseline action set $X^{min}_t=\{x \in D: g_{i}^{(j)}(x) \le 0,i=1,\dots,t, j=1,\dots,m\}$, with the exception of \cite{pmlr-v108-valls20a} where a slightly larger set is considered; \cite{yu2017} that considers stochastic constraints and the baseline action set is $\{x\in D: E[g_{i}^{(j)}(x)]\le 0, j=1,\dots,m\}$; and \cite{pmlr-v97-liakopoulos19a} which considers a $K$-slot moving window for the sum-constraint satisfaction.

The original work on this topic restricted attention to time-invariant constraints $g_{i}^{(j)}(x)\!=\!g^{(j)}(x)$. With this restriction, the work in \cite{pmlr-v48-jenatton16} achieves $\R_t\!\leq\! O(\max\{t^\beta,t^{1-\beta}\})$ and $\V_t\!\leq\! O(t^{1-\beta/2})$ constraint violation, which yields $\R_t$, $\V_t \!\leq \!O(t^{2/3})$ with $\beta\!=\!2/3$.  Similar bounds are derived in \cite{JMLR:v13:mahdavi12a}.  It is worth noting that these results are primarily of interest for their analysis of the primal-dual algorithm rather than the performance bounds per se, since classical algorithms such as FTRL are already known to achieve $O(\sqrt t)$ regret and no constraint violation for constant constraints. 
For general time-varying cost and constraint functions, \cite{pmlr-v70-sun17a} achieve $O(\sqrt t)$ regret and $O(t^{3/4})$ constraint violation; \cite{pmlr-v97-liakopoulos19a} achieve $\R_t= O(t^{1/2}+KT/V)$ and $\V_t=O((Vt)^{1/2})$, with $K=1$ corresponding to baseline set $X^{min}_t$ and $V$ a design parameter.   Selecting $V=t^{1/2}$ gives $O(t^{1/2})$ regret and $O(t^{3/4})$ constraint violation, similarly to \cite{pmlr-v70-sun17a}. By restricting the constraints, \cite{pmlr-v108-valls20a} improves this to $O(t^{1/2})$ regret and constraint violation.  As already noted, this requires restricting the constraints to be $g_{i}^{(j)}(x)=g^{(j)}(x)-b_{i,j}$ with $b_{i,j}\in\mathbb{R}$ i.e. the constraints are $g^{(j)}(x) \le b_{i}^{(j)}$ with time-variation confined to threshold $b_{i}^{(j)}$.  Yu et al~\cite{yu2017} also achieve $O(t^{1/2})$ regret and  \textit{expected} constraint violation (i.e. $E[\sum_{i=1}^tg_{i}^{(j)}(x_t)]\le O(t^{1/2})$), this time by restricting the constraints to be i.i.d. stochastic.  Yi et al~\cite{Johansson2020} obtain  $O(t^{2/3})$ regret and constraint violation by restricting the cost and constraint functions to be separable.  Chen et al~\cite{chen2017} focus on a form of dynamic regret that upper bounds the static regret and show $o(t)$ regret and $O(t^{2/3})$ constraint violation under a slow variation condition on the constraints and dynamic baseline action.

\section{Preliminaries}
\subsection{Exact Penalties}
We begin by recalling a classical result of Zangwill~\cite{zangwill67}.  Consider the convex optimisation problem $P$: 
$$
\min_{x\in D} f(x) \qquad \text{s.t.}\qquad  g^{(j)}(x) \le 0,\ j=1,\cdots,m
$$
where $D\subset \mathbb{R}^n$, $f:\mathbb{R}^n \rightarrow \mathbb{R}$ and $g^{(j)}:\mathbb{R}^n \rightarrow \mathbb{R}$, $j=1,\cdots,m$ are convex.  Let $X:=\{x: x\in D, g^{(j)}(x)\le0, j=1,\cdots,m\}$ denote the feasible set and $X^*\subset X$ the set of optimal points.  Define:
\begin{align}
F(x) := f(x) +  \gamma\sum_{j=1}^m\max\left\{0,g^{(j)}(x)\right\},\quad \gamma \in \mathbb R. \label{eq:pen}
\end{align}
$F(x)$ is convex since $f(\cdot)$, $g^{(j)}(\cdot)$ are convex and $\max\{\cdot\}$ preserves convexity.  

\begin{figure}[tb]
\centering
\includegraphics[width=0.4\columnwidth]{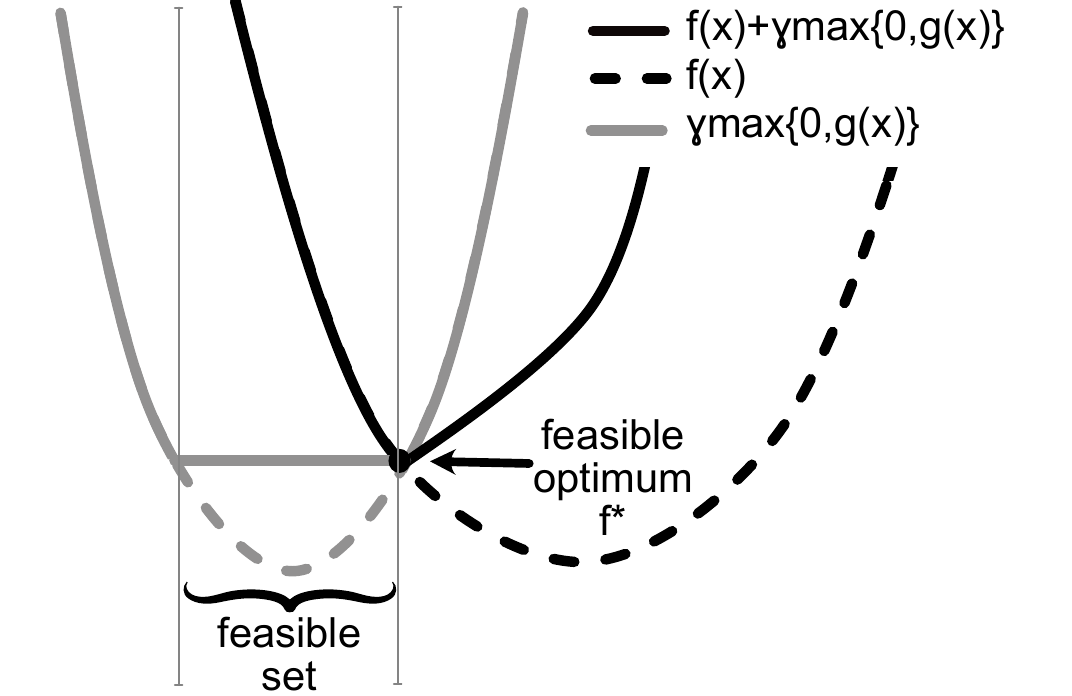}
\caption{Illustrating use of a penalty to convert constrained optimization $\min_{x:g(x)\!\le\! 0} f(x)$ into unconstrained optimization $\min_x f(x)\!+\gamma\max\{0,g(x)\}$, $\gamma\!>\!0$.  Within the feasible set $g(x)\le0$ and $\gamma\max\{0,g(x)\}\!=\!0$.  Outwith this set $\gamma\max\{0,g(x)\}=\gamma g(x)\!>\!0$.  The idea is that $\gamma$ is selected large enough that outwith the feasible set $f(x)\!+\!\gamma\max\{0,g(x)\}> f^*$, the min value of $f$ inside the feasible set.}\label{fig:penalty_sketch}
\end{figure}

The key idea is that the penalty (second term in \eqref{eq:pen}) is zero for $x\in X$, but large when $x\notin X$.   Provided $\gamma$ is selected large enough, the penalty  forces the minimum of $F(x)$ to \emph{(i)} lie in $X$ and \emph{(ii)} match $\min_{x\in X} f(x)$; see example in Fig. \ref{fig:penalty_sketch}. The next lemma, proved in the Appendix, corresponds to~\cite[Lemma 2]{zangwill67}.   
\begin{lemma}[Exact Penalty]\label{lem:zangwill}
Assume that a Slater point exists i.e. a feasible point $z\in D$ such that $g^{(j)}(z)<0$, $j=1,\cdots,m$.  Let $f^*:=\inf_{x\in X} f(x)$ (the solution to optimization $P$) .  Then there exists a finite threshold $\gamma_0\ge 0$ such that $F(x)\ge  f^*$ for all $x\in D$, $\gamma \ge \gamma_0$, with equality only when $x\in X^*$.  It is sufficient to choose ${\gamma}_0 = \frac{f^*-f(z)-1}{\max_{j\in\{1,\cdots,m\}}\{g^{(j)}(z)\}}$.
\end{lemma}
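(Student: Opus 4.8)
\medskip
\noindent\textbf{Proof approach.}
The plan is to split the domain $D$ into the feasible set $X$ and its complement $D\setminus X$, dispose of the feasible case trivially, and for the infeasible case ``pull'' each violating point back to the boundary of $X$ along the segment joining it to the Slater point $z$, exploiting convexity. Throughout I would write $h(x):=\max_{j\le m} g^{(j)}(x)$, which is convex (a pointwise maximum of finitely many convex functions) and continuous on $\mathbb{R}^n$, and use the elementary bound $\sum_{j=1}^m\max\{0,g^{(j)}(x)\}\ge\max\{0,h(x)\}$, so that $F(x)\ge f(x)+\gamma\max\{0,h(x)\}$. I would first record that $\gamma_0\ge0$: assuming $f^*>-\infty$ (so $\gamma_0$ is well defined), feasibility of $z$ gives $f(z)\ge f^*$, hence the numerator of $\gamma_0$ is $\le-1<0$ while the denominator $h(z)<0$.

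The feasible case is immediate: if $x\in X$ then $h(x)\le 0$, the penalty term vanishes, and $F(x)=f(x)\ge f^*$ by definition of $f^*$; equality forces $f(x)=f^*$, i.e.\ $x\in X^*$ (vacuously so if the infimum is not attained, in which case $X^*=\emptyset$).

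The infeasible case $x\in D\setminus X$, where $h(x)>0$, is the heart of the argument. Consider $w_\mu:=\mu z+(1-\mu)x$ for $\mu\in[0,1]$. The continuous map $\mu\mapsto h(w_\mu)$ satisfies $h(w_0)=h(x)>0$ and $h(w_1)=h(z)<0$, so by the intermediate value theorem there is $\mu\in(0,1)$ with $h(w_\mu)=0$; since $D$ is convex, $w_\mu\in D$, hence $w_\mu\in X$ and $f(w_\mu)\ge f^*$. Convexity of $f$ and of $h$ then give
\[
(1-\mu)f(x)\ \ge\ f(w_\mu)-\mu f(z)\ \ge\ f^*-\mu f(z),\qquad (1-\mu)h(x)\ \ge\ h(w_\mu)-\mu h(z)\ =\ -\mu h(z).
\]
Multiplying $F(x)\ge f(x)+\gamma h(x)$ by $1-\mu>0$ and substituting yields $(1-\mu)F(x)\ge f^*-\mu f(z)-\gamma\mu h(z)$. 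Since $\gamma\ge\gamma_0$ and $h(z)<0$, we get $\gamma h(z)\le\gamma_0 h(z)=f^*-f(z)-1<f^*-f(z)$; multiplying this by $\mu>0$ and rearranging shows $f^*-\mu f(z)-\gamma\mu h(z)>(1-\mu)f^*$, whence $F(x)>f^*$. Combining the two cases proves the claim, and the strict inequality off $X^*$ identifies where equality can hold.

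The step I expect to require the most care is the final collapse in the infeasible case: one must verify that after using the convexity inequalities the residual condition on $\gamma$ is exactly $\gamma h(z)<f^*-f(z)$, with both $x$ and $\mu$ having cancelled — this is precisely what allows a single finite threshold $\gamma_0$ to work uniformly over all infeasible points — and that the ``$-1$'' in the numerator of $\gamma_0$ is what upgrades $F(x)\ge f^*$ to the strict $F(x)>f^*$. The remaining edge cases to double-check are that $w_\mu$ indeed stays in $D$ (convexity of $D$) and that the ``equality only on $X^*$'' clause is read correctly when the infimum $f^*$ is not attained.
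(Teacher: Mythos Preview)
Your argument is correct and is essentially the same Zangwill-style proof the paper gives: split into feasible and infeasible points, and for an infeasible point locate a boundary point of $X$ on the segment joining it to the Slater point $z$, then use convexity to transfer the bound $f^*$ from that boundary point back out. The only difference is bookkeeping: you lower-bound the penalty by the single scalar $h(x)=\max_{j}g^{(j)}(x)$ and work with $h$ throughout, whereas the paper introduces the active set $A=\{j:g^{(j)}(v)=0\}$ at the boundary point $v$ and the auxiliary function $t(x)=f(x)+\gamma\sum_{j\in A}g^{(j)}(x)$, establishing $t(v)\ge f^*$, $t(z)\le f^*-1$ and $t(w)\le F(w)$ before combining them via convexity of $t$. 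Your route is a little cleaner for Lemma~\ref{lem:zangwill} in isolation; the paper's active-set formulation has the payoff that its intermediate step~(\ref{eq:zangwillG}) --- that it suffices to choose the denominator $G$ so that $\sum_{j\in A}g^{(j)}(z)/G\ge1$ --- is invoked again verbatim in the proof of Lemma~\ref{lem:two}, where the cardinality $|A|$ of the active set becomes the quantity $k_t$ governing the time-varying penalty threshold.
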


\subsection{FTRL Results}
We also recall the following standard FTRL results (for proofs see, e.g., ~\cite{shwartz-book}).

\begin{lemma}[Be-The-Leader]\label{lem:btl}
Let $F_i, i=1,\dots,t$ be a sequence of (possibly non-convex) functions $F_i:D\rightarrow \mathbb{R}$, $D\subset\mathbb{R}^n$.  Assume that $\arg\min_{x\in D}\sum_{i=1}^\tau F_i(x)$ is not empty for $\tau=1,\dots,t$.  Selecting sequence $w_{i+1}, i=1,\dots,t$ according to the Follow The Leader (FTL) update $w_{\tau+1} \in \arg\min_{x\in D} \sum_{i=1}^\tau F_i(x)$, ensures $\sum_{i=1}^t F_i(w_{i+1}) \le \sum_{i=1}^t F_i(y)$ for every $y\in D$.
\end{lemma}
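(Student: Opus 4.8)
The plan is to prove the claim by induction on the horizon $t$, using nothing more than the defining optimality property of the FTL iterates.

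For the base case $t=1$ the update gives $w_2\in\arg\min_{x\in D}F_1(x)$, so $F_1(w_2)\le F_1(y)$ for every $y\in D$, which is exactly the claimed inequality. For the inductive step, suppose the statement holds for $t-1$, i.e. $\sum_{i=1}^{t-1}F_i(w_{i+1})\le\sum_{i=1}^{t-1}F_i(y)$ for all $y\in D$. The key move is to apply this inductive hypothesis not to an arbitrary point but to the specific point $y=w_{t+1}$, which is well defined because $\arg\min_{x\in D}\sum_{i=1}^{t}F_i(x)$ is assumed non-empty. This yields $\sum_{i=1}^{t-1}F_i(w_{i+1})\le\sum_{i=1}^{t-1}F_i(w_{t+1})$; adding $F_t(w_{t+1})$ to both sides gives $\sum_{i=1}^{t}F_i(w_{i+1})\le\sum_{i=1}^{t}F_i(w_{t+1})$. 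Finally, since $w_{t+1}\in\arg\min_{x\in D}\sum_{i=1}^{t}F_i(x)$, we have $\sum_{i=1}^{t}F_i(w_{t+1})\le\sum_{i=1}^{t}F_i(y)$ for every $y\in D$, and chaining the two inequalities closes the step and hence the induction.

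Note that convexity of the $F_i$ is never invoked; the argument only requires the iterates $w_{\tau+1}$ to be well defined, which is exactly the non-emptiness hypothesis, and this is why the lemma holds for possibly non-convex $F_i$. The only point worth flagging — there is no real obstacle here — is that one must compare running sums rather than individual terms: it is not true in general that $F_i(w_{i+1})\le F_i(y)$ term by term, so the telescoping comparison must be bootstrapped through the induction by evaluating the previous-step bound at the one-step-ahead leader $w_{t+1}$.
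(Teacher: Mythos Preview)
Your induction argument is correct and is precisely the standard Be-The-Leader proof. The paper itself does not supply a proof of this lemma but simply cites the standard reference (Shalev-Shwartz), where exactly this induction is given, so your proposal matches the intended argument.
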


\begin{condition}[FTRL]\label{cond:ftrl}
(i) Domain $D$ is bounded (potentially non-convex), (ii) $\sum_{i=1}^\tau F_i(x)$ is convex (the individual $F_i$'s need not be convex), (iii) $F_i(x)$ is uniformly $L_f$-Lipschitz on $D$ i.e. $|F_i(x)-F_i(y)|\le L_f\|x-y\|$ for all $x,y\in D$ and where $L_f$ does not depend on $i$, and (iv) $R_\tau(x)$ is $\sqrt{\tau}$-strongly convex and $\big(R_i(x)-R_{i-1}(x)\big)$ is uniformly Lipschitz, e.g. $\sqrt{\tau}\|x\|_2^2$.
\end{condition}
%
\begin{lemma}[Regret of FTRL]\label{th:ftrl}
When Condition \ref{cond:ftrl} holds, the sequence $\{x_\tau\}_{\tau=1}^t$ generated by the FTRL update
$
x_{\tau+1} \in \arg\min_{x\in D} R_\tau(x)+\sum_{i=1}^\tau F_i(x) 
$
has regret $\R_t=\sum_{i=1}^t F_i(x_i)-F_i(x)\le O(\sqrt{t})$ for all $x\in D$.
\end{lemma}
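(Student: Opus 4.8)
\noindent\emph{Proposed proof.} The plan is to reduce the claim to the Be-The-Leader inequality (Lemma \ref{lem:btl}) by absorbing the time-varying regulariser into the loss sequence, and then to control the resulting ``stability'' terms using the strong convexity of the regularised objective supplied by Condition \ref{cond:ftrl}(iv).

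\textbf{Step 1 (reduction to Be-The-Leader).} Put $G_0:=R_0$ and $G_i:=F_i+(R_i-R_{i-1})$ for $i\ge 1$, so that $\sum_{i=0}^{\tau}G_i=R_\tau+\sum_{i=1}^{\tau}F_i$. Then the FTRL update of the lemma is precisely the Follow-The-Leader update for $\{G_i\}$, i.e. $x_{\tau+1}\in\arg\min_{x\in D}\sum_{i=0}^{\tau}G_i(x)$, and the arg-mins are nonempty by Condition \ref{cond:ftrl} ($D$ bounded, objective continuous and strongly convex). Lemma \ref{lem:btl} gives $\sum_{i=0}^t G_i(x_{i+1})\le\sum_{i=0}^t G_i(x)$ for every $x\in D$; expanding, cancelling the telescoping $R_i-R_{i-1}$ terms on the right, and splitting $F_i(x_i)-F_i(x)=\big(F_i(x_i)-F_i(x_{i+1})\big)+\big(F_i(x_{i+1})-F_i(x)\big)$, this rearranges to
\begin{align*}
\R_t\ \le\ R_t(x)-R_0(x_1)+\sum_{i=1}^t\big(F_i(x_i)-F_i(x_{i+1})\big)+\sum_{i=1}^t\big(R_{i-1}(x_{i+1})-R_i(x_{i+1})\big).
\end{align*}

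\textbf{Step 2 (the regulariser terms).} For the canonical choice $R_\tau(x)=\sqrt\tau\,\|x\|_2^2$ (or, more generally, any $\sqrt\tau$-strongly convex $R_\tau$ that is non-decreasing in $\tau$ and bounded below on $D$): boundedness of $D$ makes $R_t(x)-R_0(x_1)\le\sqrt t\cdot\mathrm{diam}(D)^2+O(1)=O(\sqrt t)$, and monotonicity makes $R_{i-1}(x_{i+1})-R_i(x_{i+1})\le 0$, so the last sum is non-positive and may be dropped. \textbf{Step 3 (the stability terms).} By Condition \ref{cond:ftrl}(iii), $F_i(x_i)-F_i(x_{i+1})\le L_f\|x_i-x_{i+1}\|$, so it remains to bound $\|x_i-x_{i+1}\|$. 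Write $\Phi_{\tau}:=R_\tau+\sum_{j\le\tau}F_j$, which is $\sqrt\tau$-strongly convex because $\sum_{j\le\tau}F_j$ is convex and $R_\tau$ is $\sqrt\tau$-strongly convex; note $x_i=\arg\min_D\Phi_{i-1}$ and $x_{i+1}=\arg\min_D\Phi_i$ with $\Phi_i=\Phi_{i-1}+G_i$, where $G_i=F_i+(R_i-R_{i-1})$ is Lipschitz with a constant $L$ independent of $i$ by Condition \ref{cond:ftrl}(iii)--(iv). The standard two-sided strong-convexity estimate — lower-bounding $\Phi_{i-1}(x_{i+1})-\Phi_{i-1}(x_i)$ by $\tfrac12\sqrt{i-1}\,\|x_i-x_{i+1}\|^2$ using optimality of $x_i$, lower-bounding $\Phi_i(x_i)-\Phi_i(x_{i+1})$ by $\tfrac12\sqrt{i-1}\,\|x_i-x_{i+1}\|^2$ using optimality of $x_{i+1}$, adding the two, and invoking $G_i(x_i)-G_i(x_{i+1})\le L\|x_i-x_{i+1}\|$ — yields $\|x_i-x_{i+1}\|\le 2L/\sqrt{i-1}$ for $i\ge2$, while $\|x_1-x_2\|\le\mathrm{diam}(D)$. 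Summing, $\sum_{i=1}^t L_f\|x_i-x_{i+1}\|\le L_f\,\mathrm{diam}(D)+2LL_f\sum_{i=2}^t(i-1)^{-1/2}=O(\sqrt t)$. \textbf{Step 4.} Substituting Steps 2--3 into Step 1 gives $\R_t\le O(\sqrt t)$ uniformly over $x\in D$.

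The main obstacle is the stability estimate of Step 3: it is exactly what dictates that $R_\tau$ must have strong-convexity modulus growing like $\sqrt\tau$ — a slower growth (e.g. constant) would leave sums of order $\Theta(t)$ rather than $O(\sqrt t)$, while a faster growth would blow up $R_t(x)$ in Step 2. Two points need care along the way: it is the Lipschitz constant of the \emph{full} increment $G_i=F_i+(R_i-R_{i-1})$, not merely that of $F_i$, that must remain bounded in $i$ (hence Condition \ref{cond:ftrl}(iv)); and, if $D$ is genuinely non-convex, the two-sided strong-convexity argument must be arranged so as not to rely on first-order optimality over $D$ — e.g. by working with a strongly convex extension of each $\Phi_\tau$ together with the fact that $x_i,x_{i+1}$ are its minimisers over $D$.
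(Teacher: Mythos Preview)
Your proof is correct and follows the standard FTRL analysis; note that the paper does not actually prove this lemma but simply cites it as a well-known result (``for proofs see, e.g., \cite{shwartz-book}''), so there is no paper proof to compare against beyond the standard literature. Your decomposition via $G_i=F_i+(R_i-R_{i-1})$, application of Lemma~\ref{lem:btl}, and the two-sided strong-convexity stability estimate is exactly the textbook argument the paper is invoking, and in particular your Step~3 is precisely the content of the paper's Lemma~\ref{lem:reg} (also stated without proof). The one loose end you correctly flag is the case of genuinely non-convex $D$: the first-order optimality step in the stability bound does need $D$ convex as written, and the paper itself does not address this---it simply imports Lemma~\ref{lem:reg} as a black box---so your caveat is appropriate but not a defect relative to the paper.
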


\begin{lemma}[$\sigma_\tau$-Strongly Convex Regulariser]\label{lem:reg}
When $\sum_{i=1}^\tau F_i(x)$ is $\sigma_\tau$-strongly convex, $F_i(x)$ uniformly $L_f$-Lipschitz over $D$ and $w_{\tau+1}\in \arg \min_{x\in D} \sum_{i=1}^\tau F_i(x)$, it holds $\|w_{\tau+1}-w_{\tau}\| \le 2L_f/(\sigma_\tau+\sigma_{\tau-1})$
\end{lemma}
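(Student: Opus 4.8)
Write $\Phi_\tau(x):=\sum_{i=1}^\tau F_i(x)$, so that $\Phi_\tau$ is $\sigma_\tau$-strongly convex, $w_{\tau+1}\in\arg\min_{x\in D}\Phi_\tau(x)$ and $w_\tau\in\arg\min_{x\in D}\Phi_{\tau-1}(x)$. The plan is a standard ``stability of the minimiser'' argument. First I would record the growth inequality for a strongly convex function at its minimiser: if $h$ is $\sigma$-strongly convex on $D$ with minimiser $x^\star$, then first-order optimality $\langle \grad h(x^\star),\,y-x^\star\rangle\ge 0$ together with the strong-convexity lower bound gives $h(y)\ge h(x^\star)+\tfrac{\sigma}{2}\|y-x^\star\|^2$ for every $y\in D$.

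Next I would apply this twice: to $\Phi_{\tau-1}$ at its minimiser $w_\tau$ evaluated at $y=w_{\tau+1}$, and to $\Phi_\tau$ at its minimiser $w_{\tau+1}$ evaluated at $y=w_\tau$. Adding the two resulting inequalities, the four $\Phi$-terms regroup as $\big(\Phi_{\tau-1}(w_{\tau+1})-\Phi_\tau(w_{\tau+1})\big)+\big(\Phi_\tau(w_\tau)-\Phi_{\tau-1}(w_\tau)\big)$, which by $\Phi_\tau=\Phi_{\tau-1}+F_\tau$ collapses to $F_\tau(w_\tau)-F_\tau(w_{\tau+1})$. This yields
\[
F_\tau(w_\tau)-F_\tau(w_{\tau+1})\ \ge\ \frac{\sigma_\tau+\sigma_{\tau-1}}{2}\,\|w_{\tau+1}-w_\tau\|^2 .
\]
I would then bound the left-hand side from above by the uniform $L_f$-Lipschitz property, $F_\tau(w_\tau)-F_\tau(w_{\tau+1})\le L_f\|w_{\tau+1}-w_\tau\|$, combine the two estimates, and cancel one factor of $\|w_{\tau+1}-w_\tau\|$ (the claim being trivial when it vanishes) to obtain $\|w_{\tau+1}-w_\tau\|\le 2L_f/(\sigma_\tau+\sigma_{\tau-1})$.

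The only delicate point is the growth inequality at the minimiser when $D$ is allowed to be non-convex, as under Condition~\ref{cond:ftrl}: the first-order optimality step implicitly uses that moving from one minimiser a little towards the other stays in $D$. I expect this to be the main obstacle to making the argument watertight; it evaporates when $D$ is convex --- which is the case in all the concrete settings where the lemma is invoked --- and absent convexity one would instead work with a convex neighbourhood of the optima or the directional-derivative form of optimality. Everything else is the routine algebra indicated above.
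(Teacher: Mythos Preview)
The paper does not give its own proof of this lemma: it is one of the ``standard FTRL results'' for which the paper refers the reader to \cite{shwartz-book}. Your argument is exactly the textbook stability-of-the-minimiser proof one finds there --- apply the quadratic-growth inequality at the minimiser of $\Phi_{\tau-1}$ and at the minimiser of $\Phi_\tau$, add, telescope $\Phi_\tau-\Phi_{\tau-1}=F_\tau$, and close with the Lipschitz bound --- so there is nothing to contrast.

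Your caveat about non-convex $D$ is well taken and is in fact a real wrinkle in the paper's statement: Condition~\ref{cond:ftrl} explicitly allows $D$ to be non-convex, yet the growth inequality $h(y)\ge h(x^\star)+\tfrac{\sigma}{2}\|y-x^\star\|^2$ hinges on first-order optimality along the segment $[x^\star,y]$, which need not lie in $D$. The paper only invokes the lemma (inside Theorem~\ref{th:itworks}) for the Penalised FTRL iterates over the same $D$, and its concrete applications use convex $D$, so in context the issue is harmless; but as written the lemma and Condition~\ref{cond:ftrl} are slightly mismatched, and you are right to flag it.
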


\section{Penalised FTRL}

\subsection{Exact Penalties For Time-Invariant Constraints}
We begin by demonstrating the application of Lemma \ref{lem:zangwill} to FTRL update (\ref{eq:ftrl_update}) with time-invariant action set $X$.  Selecting
$
F_i(x)= f_i(x)+\gamma h(x)
$
with $h(x)= \sum_{j=1}^m\max\{0,g^{(j)}(x)\}$ and defining the bounded domain $D$ with $X\subset D$, then by standard analysis, cf. \cite{mcmahan-survey17}, the penalized FTRL update\footnote{Note the subtle yet crucial difference w.r.t. non-penalized FTRL update \eqref{eq:ftrl_update}.}:
\begin{align}
x_{\tau+1} \in \arg\min_{x\in D} \left\{ R_\tau(x)+\sum_{i=1}^\tau F_i(x)\right\} \label{eq:ftrl_update_pen}
\end{align}
ensures regret $\sum_{i=1}^t (F_i(x_i)-F_i(x))\le O(\sqrt{t})$ for all $x\in D$, and since $X\subset D$ for all $x\in X$.  Of course this says nothing about whether the actions $x_{i}$ lie in set $X$ nor anything much about the regret of $f_i(x_i)$, but when set $X$ has a Slater point and $\gamma$ is selected large enough then by Lemma \ref{lem:zangwill} we have that $x_{\tau+1}\in X$ for all $\tau$.   It follows that  $F_i(x_i)= f_i(x_i)$ (since $h(x_i)=0$ when $x_i\in X$) and so regret $\sum_{i=1}^t (F_i(x_i)-F_i(x))=\sum_{i=1}^t (f_i(x_i)-f_i(x))\le  O(\sqrt{t})$ for all $x\in X$.

\subsection{Penalties For Time-Varying Constraints}
We now extend consideration to FTRL with time-varying constraints.  Our aim is to define a penalty which is zero on a set $\hat{X}_\tau^{max}\approx X_{max}$, and large enough outside this set to force the minimum of $\sum_{i=1}^\tau F_i(x)$ to lie in $\hat{X}^{max}$.

\subsubsection{Penalties Which Are Zero When $x\in \hat{X}^{max}_\tau$.}

Consider extending the penalty-based FTRL (\ref{eq:ftrl_update_pen}) to time-varying constraints. We might try selecting:
\[
F_i(x)= f_i(x)+\gamma h_i(x), \quad \text{with} \quad h_i(x)= \sum_{j=1}^m\max\left\{0,g_i^{(j)}(x)\right\},
\]
but we immediately run into the following difficulty.   We have that
$
\sum_{i=1}^\tau F_i(x) = \sum_{i=1}^\tau f_i(x) +  \gamma\sum_{i=1}^\tau \sum_{j=1}^m\max\{0,g_i^{(j)}(x)\}
$
and so to make the second term zero requires $g_i^{(j)}(x)\le 0$ for all $i\leq\tau$ and $j\leq m$, i.e. requires every constraint over all time to simultaneously be satisfied.  This penalty choice  $h_i(\cdot)$ therefore corresponds to benchmark $X^{min}_t$, whereas our interest is in set $X^{max}_t$.  It is perhaps worth noting that this corresponds to the penalty used in the primal-dual literature, so it is unsurprising that those results are confined to $X^{min}$.

With this in mind, consider instead selecting 
\[
h_\tau(x)= \sum_{j=1}^m\max\left\{0,\sum_{i=1}^\tau g_i^{(j)}(x)\right\} -\sum_{j=1}^m\max\left\{0,\sum_{i=1}^{\tau-1} g_i^{(j)}(x)\right\}
\]
with $h_1(x)= \sum_{j=1}^m\max\{0,g_i^{(j)}(x)\}$.  Then,
\[
\sum_{i=1}^\tau F_i(x) = \sum_{i=1}^\tau f_i(x) +  \gamma\sum_{j=1}^m\max\left\{0, \sum_{i=1}^\tau g_i^{(j)}(x)\right\}.
\] 
We now have a sum-constraint in the second term, as desired.   Unfortunately, this choice of $h_i(\cdot)$ violates the conditions needed for FTRL to achieve $O(\sqrt{t})$ regret.  Namely, it is required that $F_i(\cdot)$ is uniformly Lipschitz but $h_i(\cdot)$ does not satisfy this condition, and so neither does $F_i(\cdot)$.   To see this, observe that when $g_i^{(j)}(\cdot)$ is uniformly Lipschitz with constant $L_g$, then $\sum_{i=1}^\tau g_i^{(j)}(x)$ has a Lipschitz constant $\tau L_g$ that scales with $\tau$, and so there exists no uniform upper bound.  The $\max$ operator in $h_i(\cdot)$ does not change the Lipschitz constant (see Lemma \ref{lem:maxlips}); thus $h_i(\cdot)$ is $\tau L_g$ Lipschitz, which prevents FTRL achieving $\R_t\leq O(\sqrt{t})$.

These considerations lead us to the following penalty,
\begin{align}
h_\tau(x)= \sum_{j=1}^m\max\left\{0,\frac{1}{\tau}\sum_{i=1}^\tau g_i^{(j)}(x)\right\}.  \label{eq:ftrl_pen}
\end{align}
When $g_i^{(j)}(\cdot)$ is uniformly Lipschitz with constant $L_g$ then so is $h_i(\cdot)$ due to the $1/\tau$ prefactor added to the sum and the following Lemma which just states that when a function $h(x)$ is $L$-Lipschitz then $\max\{0,h(x)\}$ is also $L$-Lipschitz:
\begin{lemma}\label{lem:maxlips}
When $|h(x)-h(y)|\le L\|x-y\|$ then $|\max\{0,h(x)\}-\max\{0,h(y)\}|\le L\|x-y\|$.  
\end{lemma}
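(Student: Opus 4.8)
The plan is to reduce the statement to the elementary fact that the scalar map $t\mapsto\max\{0,t\}$ is itself $1$-Lipschitz on $\mathbb{R}$, and then compose. Concretely, I would first establish the scalar inequality $|\max\{0,a\}-\max\{0,b\}|\le|a-b|$ for all $a,b\in\mathbb{R}$, and then apply it with $a=h(x)$ and $b=h(y)$, chaining with the hypothesis $|h(x)-h(y)|\le L\|x-y\|$ to conclude $|\max\{0,h(x)\}-\max\{0,h(y)\}|\le|h(x)-h(y)|\le L\|x-y\|$.

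For the scalar inequality there are two clean routes. The first is a short case analysis on the signs of $a$ and $b$: if both are nonpositive the left side is $0$; if both are nonnegative it equals $|a-b|$; and in the mixed case, say $a\ge 0\ge b$, the left side equals $a$, while $|a-b|=a-b\ge a$ since $-b\ge 0$. The symmetric mixed case is identical. The second, more compact route is to write $\max\{0,t\}=\tfrac{1}{2}(t+|t|)$ and use the triangle inequality together with the reverse triangle inequality $\big||a|-|b|\big|\le|a-b|$, giving $|\max\{0,a\}-\max\{0,b\}|=\tfrac{1}{2}\big|(a-b)+(|a|-|b|)\big|\le\tfrac{1}{2}\big(|a-b|+|a-b|\big)=|a-b|$. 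I would present whichever is shorter in context — likely the algebraic identity version.

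There is essentially no obstacle here; the only thing to be careful about is not to over-claim — the statement is for a single scalar-valued $h$, so no vector-valued subtleties arise, and the norm $\|\cdot\|$ on the right is just carried through untouched from the hypothesis. The lemma is then immediate once the scalar contraction property is in hand.
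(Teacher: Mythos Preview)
Your proposal is correct, and the second route you describe --- writing $\max\{0,t\}=\tfrac{1}{2}(t+|t|)$ and combining the triangle and reverse triangle inequalities --- is exactly the argument the paper gives. The case-analysis alternative you mention would also work but is not what the paper chose.
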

\begin{proof}
Observe that $2\max\{0,h(x)\} = h(x) + |h(x)|$.  Therefore,
$
2|\max\{0,h(x)\} - \max\{0,h(y)\}| 
= |h(x) -h(y) + |h(x)|-|h(y)||
\le |h(x) -h(y)| + | |h(x)|-|h(y)| |
\le |h(x) -h(y)| + | h(x)-h(y)|  
\le 2L\|x-y\|
$.
\end{proof}
With this choice, we can write:
\begin{align*}
\sum_{i=1}^\tau F_i(x) &= \sum_{i=1}^\tau f_i(x) + \gamma \sum_{j=1}^m \sum_{i=1}^\tau\max\left\{0, \frac{1}{i}\sum_{k=1}^i g_k^{(j)}(x)\right\}.
\end{align*}

The second term is zero when
\begin{align*}
x_\tau &\in \hat{X}^{max}_\tau:=\Big\{x\in D: \sum_{i=1}^\tau h_i(x)\le 0\Big\}
=\Big\{x:\sum_{k=1}^i g_k^{(j)}(x)\le 0, j\leq m, i\leq \tau\Big\}
\end{align*}

\subsubsection{Penalties Which Are Large When $x\notin \hat{X}^{max}_\tau$}
In addition to requiring the penalty for time-varying constraints to be zero for $x \in \hat{X}^{max}_\tau$ we also require the penalty to large enough when $x \notin \hat{X}^{max}_\tau$ so as to force the minimum of $\sum_{i=1}^\tau F_i(x)$ to lie in set $\hat{X}^{max}_\tau$, or at least to only result in $O(\sqrt{\tau})$ violation.  

As already noted, to use FTRL we need $F_i(\cdot)$ to be uniformly Lipschitz, which requires $f_i(\cdot)$ to be uniformly Lipschitz.   When $f_i(\cdot)$ is $L_f$-Lipschitz then $|\sum_{i=1}^\tau f_i(x)|$ may grow linearly with $\tau$ at rate $\tau L_f$.   We therefore require the penalty $\sum_{i=1}^\tau h_i(x)$ to also grow at least linearly with $\tau$ since otherwise for all $\tau$ large enough $|\sum_{i=1}^\tau f_i(x)| \gg \sum_{i=1}^\tau h_i(x)$ and the penalty may become ineffective i.e. we can have $x_\tau \notin \hat{X}_\tau$ for all $\tau$ large enough and so end up with $O(t)$ constraint violation, which is no good. 

We formalize the requirement the sum-penalty $\sum_{i=1}^\tau h_i(x)$ in (\ref{eq:ftrl_pen}) needs to grow quickly enough as follows.   Let $\partial \hat{X}^{max}_\tau$ denote the boundary of  $\hat{X}^{max}_\tau$.   Let:
\[k_{\tau}:=\min_{x\in \partial \hat{X}^{max}_\tau}|\{(i,j): \frac{1}{i}\sum_{k=1}^i g_k^{(j)}(x)\ge 0, i\in\{1,\dots,\tau\}, j=\{1,\dots,m\}\}|.
\]
That is, $k_{\tau}$ is the minimum number of constraints active at the boundary of $\hat{X}^{max}_\tau$.   Observe that $1\le k_\tau \le \tau$ with, for example, $k_\tau=\tau$ when $g_{i}^{(j)}(x)=g^{(j)}(x)$ does not depend on $i$.

\begin{condition}[Penalty Growth]\label{cond:penalty1}
Let  $z\in D$ be a common Slater point such that $\frac{1}{\tau}\sum_{i=1}^\tau g_i^{(j)}(z)<-\eta<0$ for $j=1,\dots,m$ and $\tau>t_\epsilon$ (the same $z$ must work for all $\tau$ and $j$).  We require that $k_{\tau}\ge \frac{\beta}{\eta} \tau$ for all $\tau>t_\epsilon$, where $\beta>0$ and the same $\beta$ must work for all $\tau=1,\dots,t$.
\end{condition}

\subsubsection{Time-Varying Exact Penalties}
We are now in a position to extend the penalty approach to time-varying constraints.  We begin by applying Lemma \ref{lem:zangwill} to optimisation problem $P^\prime$:
$
\min_{x\in D} f(x)\ 
s.t.\  \frac{1}{i}\sum_{k=1}^i g_{k}^{(j)}(x)\le 0,\ i=1,\cdots,t,\ j=1,\cdots,m
$ 
where  $f(\cdot)$ and $g_{i}^{(j)}(\cdot)$, $i=1,\dots,t$, $j=1,\cdots,m$ are convex and $D\subset \mathbb{R}^n$ is convex and bounded.  Let $C^*=\arg\min_{x\in \hat{X}^{max}_t} f(x)$.   Define
\begin{align*}
H(x) := f(x)+ \gamma\sum_{i=1}^t\sum_{j=1}^m\max\left\{0,\frac{1}{i}\sum_{k=1}^ig_{k}^{(j)}(x)\right\}
\end{align*}
where $\gamma\in\mathbb{R}$.  Note that $H(\cdot)$ is convex since $f(\cdot)$, $g_{i}^{(j)}(\cdot)$ are convex and composition with $\max$ preserves convexity.  
\begin{lemma}\label{lem:two}
Assume a Slater point exists, i.e. a $z\!\in\! D$ such that $\frac{1}{i}\sum_{k=1}^i g_k^{(j)}(z)\!<\!-\eta<0$, $i=1,\dots,t$, $j=1,\dots,m$.  Let $f^*:=\min_{x\in \hat{X}^{max}_t} f(x)$.  Then there exists a finite threshold $\gamma_0\ge 0$ such that $H(x)\ge  f^*$ for all $x\in D$, $\gamma \ge \gamma_0$, with equality only when $x\in \hat{X}^{max}_t$.   It is sufficient to choose $\gamma_0 \ge \frac{f^*-f(z)-1}{-k_t\eta}$. 
\end{lemma}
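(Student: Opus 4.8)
The plan is to run the exact-penalty argument of Lemma~\ref{lem:zangwill} for the convex program $P'$, viewing the $tm$ averaged functions $\tilde g^{(i,j)}(x):=\tfrac{1}{i}\sum_{k=1}^i g_k^{(j)}(x)$ (each convex, being an average of convex functions) as the constraints, so that $\hat X^{max}_t=\{x\in D:\tilde g^{(i,j)}(x)\le 0\ \forall i\le t,\ j\le m\}$ is convex and the Slater hypothesis reads $\tilde g^{(i,j)}(z)<-\eta<0$ for all $i,j$. The single place where this improves on a verbatim application of Lemma~\ref{lem:zangwill} --- and the reason the threshold can be taken as small as $\gamma_0=\frac{f^*-f(z)-1}{-k_t\eta}$ rather than $\frac{f^*-f(z)-1}{-\eta}$ --- is that, by the definition of $k_t$, at least $k_t$ of these constraints are simultaneously active at every point of $\partial\hat X^{max}_t$, which multiplies the accumulated penalty by a factor $k_t$.

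I would split on whether $x\in\hat X^{max}_t$. If $x\in\hat X^{max}_t$, every term $\max\{0,\tilde g^{(i,j)}(x)\}$ vanishes, so $H(x)=f(x)\ge f^*$ by definition of $f^*=\min_{\hat X^{max}_t}f$, with equality precisely on $C^*\subset\hat X^{max}_t$. If $x\in D\setminus\hat X^{max}_t$, consider the segment $[z,x]\subset D$ ($D$ convex): the map $\lambda\mapsto\max_{i,j}\tilde g^{(i,j)}\big((1-\lambda)z+\lambda x\big)$ is continuous, strictly negative at $\lambda=0$ and strictly positive at $\lambda=1$ (some constraint is violated at $x$), so by the intermediate value theorem there is $\lambda\in(0,1)$ with $\bar x:=(1-\lambda)z+\lambda x\in\partial\hat X^{max}_t$. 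By the definition of $k_t$, at least $k_t$ pairs $(i,j)$ satisfy $\tilde g^{(i,j)}(\bar x)\ge 0$; for each such pair, convexity of $\tilde g^{(i,j)}$ along $[z,x]$ together with $\tilde g^{(i,j)}(z)<-\eta$ forces $\max\{0,\tilde g^{(i,j)}(x)\}\ge\tilde g^{(i,j)}(x)\ge\tfrac{(1-\lambda)\eta}{\lambda}>0$, so the penalty sum is at least $k_t(1-\lambda)\eta/\lambda$. Likewise $\bar x\in\hat X^{max}_t$ and convexity of $f$ give $f^*\le f(\bar x)\le(1-\lambda)f(z)+\lambda f(x)$, i.e. $f(x)\ge\big(f^*-(1-\lambda)f(z)\big)/\lambda$. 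Adding the two bounds,
\[
H(x)\ \ge\ \frac{f^*-(1-\lambda)f(z)}{\lambda}+\gamma\,\frac{k_t(1-\lambda)\eta}{\lambda}\ =\ f^*+\frac{1-\lambda}{\lambda}\bigl(f^*-f(z)+\gamma k_t\eta\bigr).
\]
Since $z$ is feasible, $f(z)\ge f^*$, so choosing $\gamma\ge\frac{f(z)+1-f^*}{k_t\eta}=\frac{f^*-f(z)-1}{-k_t\eta}\ (>0)$ makes the bracketed term $\ge 1>0$; as $\lambda\in(0,1)$ this gives $H(x)>f^*$ strictly. Combining the two cases yields $H(x)\ge f^*$ on all of $D$, with equality only on $C^*\subset\hat X^{max}_t$, which is the claim.

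The step demanding the most care is the boundary-crossing argument: one must check that the crossing point $\bar x$ genuinely lies on $\partial\hat X^{max}_t$ (so that the "at least $k_t$ active constraints" lower bound applies there), and that the mixing coefficient $\lambda$ is bounded strictly away from both $0$ and $1$ --- away from $0$ because $z$ is a strict interior (Slater) point, and away from $1$ because $x$ is infeasible --- since the final estimate degenerates as $\lambda\to 1$. The remaining manipulations are the routine convexity bookkeeping already present in the proof of Lemma~\ref{lem:zangwill}.
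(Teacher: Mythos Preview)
Your proof is correct and follows essentially the same line as the paper's: both apply the Zangwill exact-penalty argument to the averaged constraints $\tilde g^{(i,j)}$, locate a boundary point on the segment from the Slater point $z$ to an infeasible $x$, and exploit that at least $k_t$ constraints are active there so that the accumulated slack $\sum_{(i,j)\in A}\tilde g^{(i,j)}(z)\le -k_t\eta$ replaces the single $-\eta$ in the denominator of $\gamma_0$. The only difference is presentational: the paper delegates the bulk of the work to Lemma~\ref{lem:zangwill} and patches the bound via~(\ref{eq:zangwillG}), whereas you unroll the whole argument self-containedly.
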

\begin{proof}
Setting the expression for $\gamma_0$ to one side for now, the result follows from applying Lemma \ref{lem:zangwill} to $P^\prime$.  Turning now to expression $\gamma_0 \!\ge\! \frac{f^*-f(z)-1}{k_t\eta}$, comparing this with the expression in Lemma \ref{lem:zangwill}, observe that the only change is in the denominator, which applying Lemma \ref{lem:zangwill} to $P^\prime$ is $\max_{i\leq t, j\leq m}\{\frac{1}{i}\sum_{k=1}^ig_{k}^{(j)}(z)\}\!=\!-\eta$.  Referring to (\ref{eq:zangwillG}) in the proof of Lemma \ref{lem:zangwill}, it is sufficient the denominator $G$ of $\gamma_0$ is such that $\frac{\sum_{(i,j)\in A} \frac{1}{i}\sum_{k=1}^ig_{k}^{(j)}(z)}{G}\ge 1$, where $A\subset\{1,\dots,t\}\times\{1,\dots,m\}$.   By assumption $g_{k}^{(j)}(z)\!\le\! -\eta$ and so $\sum_{j\in A} \frac{1}{i}\sum_{k=1}^ig_{k}^{(j)}(z)\! \le\! -|A|\eta$ with $|A|\!\ge\!1$.  Now $k_t\!\in[1,|A|]$, thus suffices to see setting $G=-k_t\eta$ also meets this requirement. 
\end{proof}

\begin{theorem}[Time-Varying Exact Penalty]\label{th:constraint}
The sequence $x_\tau$, $\tau=1,\dots,t$ generated by the FTRL update (\ref{eq:ftrl_update_pen}) with
$
F_i(x)= f_i(x)+\gamma h_i(x)
$
and $h_i(x):=\sum_{j=1}^m\max\{0,\frac{1}{i}\sum_{k=1}^i g_k^{(j)}(x)\}$ satisfies $x_{\tau+1}\in \hat{X}^{max}_\tau$ for $\tau> t_\epsilon$ when Condition \ref{cond:penalty1} holds and parameter $\gamma>\frac{E+L+1}{\beta}$ where
$
E\ge \max_{y\in D,i\in\{1,\dots,t\}}(R_i(y)-R_i(z))/i,\ 
L\ge \max_{y\in D,i\in\{1,\dots,t\}} f_i(y)-f_i(z)
$
with $z\in D$  a Slater point. 
\end{theorem}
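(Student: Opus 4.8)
The plan is to observe that at each fixed time $\tau$ the function minimised by the Penalised FTRL update~(\ref{eq:ftrl_update_pen}) is exactly of the form handled by Lemma~\ref{lem:two}, and then to check that the single constant $\gamma$ in the statement exceeds the time-$\tau$ penalty threshold of that lemma \emph{uniformly} in $\tau$.

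First I would fix $\tau>t_\epsilon$ and write the update objective as
\[
\Phi_\tau(x):=R_\tau(x)+\sum_{i=1}^\tau f_i(x)+\gamma\sum_{i=1}^\tau\sum_{j=1}^m\max\Bigl\{0,\tfrac1i\sum_{k=1}^i g_k^{(j)}(x)\Bigr\}=q_\tau(x)+\gamma\sum_{i=1}^\tau h_i(x),
\]
where $q_\tau:=R_\tau+\sum_{i=1}^\tau f_i$. Since $R_\tau$ is strongly convex and the $f_i$, $g_k^{(j)}$ are convex, $q_\tau$ is convex and $\Phi_\tau$ is precisely the function $H$ of Lemma~\ref{lem:two} with the horizon $t$ replaced by $\tau$ and the cost $f$ replaced by $q_\tau$; write $f^\star_\tau:=\min_{x\in\hat{X}^{max}_\tau}q_\tau(x)$ ($D$ taken convex and compact, as in Lemma~\ref{lem:two}). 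Lemma~\ref{lem:two} then yields $\Phi_\tau(x)\ge f^\star_\tau$ for all $x\in D$, with equality only on $\hat{X}^{max}_\tau$, provided $\gamma\ge\gamma_0^{(\tau)}:=\bigl(q_\tau(z)-f^\star_\tau+1\bigr)/(k_\tau\eta)$, with $z,\eta,k_\tau$ as in Condition~\ref{cond:penalty1}. But $\Phi_\tau$ does attain the value $f^\star_\tau$ — at the constrained minimiser of $q_\tau$ over $\hat{X}^{max}_\tau$, where the penalty vanishes — so any minimiser $x_{\tau+1}$ of $\Phi_\tau$ must lie in $\hat{X}^{max}_\tau$. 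Everything therefore reduces to showing $\gamma_0^{(\tau)}<\gamma$ for every $\tau>t_\epsilon$.

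For the numerator of $\gamma_0^{(\tau)}$, using $\hat{X}^{max}_\tau\subset D$ and $\min(a+b)\ge\min a+\min b$,
\[
q_\tau(z)-f^\star_\tau\le q_\tau(z)-\min_{y\in D}q_\tau(y)\le\Bigl(R_\tau(z)-\min_{y\in D}R_\tau(y)\Bigr)+\sum_{i=1}^\tau\Bigl(f_i(z)-\min_{y\in D}f_i(y)\Bigr)\le\tau E+\tau L
\]
by the choice of $E$ and $L$. For the denominator, Condition~\ref{cond:penalty1} gives $k_\tau\ge(\beta/\eta)\tau$, hence $k_\tau\eta\ge\beta\tau$, for all $\tau>t_\epsilon$. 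Combining these,
\[
\gamma_0^{(\tau)}\le\frac{\tau(E+L)+1}{\beta\tau}\le\frac{E+L+1}{\beta}<\gamma,
\]
so $x_{\tau+1}\in\hat{X}^{max}_\tau$ for every $\tau>t_\epsilon$, as claimed.

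The crux — and the reason Condition~\ref{cond:penalty1} is imposed — is the uniformity step: the cost $q_\tau$ seen at time $\tau$ is a sum of $\tau$ uniformly Lipschitz terms plus the growing regulariser $R_\tau$, so $q_\tau(z)-f^\star_\tau$ inevitably scales like $\Theta(\tau)$, and a fixed $\gamma$ can keep the penalty dominant only if the number $k_\tau$ of constraints active on $\partial\hat{X}^{max}_\tau$ grows like $\Theta(\tau)$ too — exactly what $k_\tau\ge(\beta/\eta)\tau$ provides, with the $\tau$ in that bound cancelling the $\tau$ in the numerator. A technicality I would flag is the $t_\epsilon$ transient: Condition~\ref{cond:penalty1} only guarantees the common Slater margin $\eta$ and the bound on $k_\tau$ for $\tau>t_\epsilon$, so inside the Zangwill-type argument behind Lemma~\ref{lem:two} one should count only the active constraints with index $i>t_\epsilon$ (still at least $k_\tau-mt_\epsilon=\Theta(\tau)$ of them) and absorb the bounded contribution of the first $t_\epsilon$ penalty terms into the additive constant; this does not affect the displayed bound once $\tau$ is large, which is all the theorem claims.
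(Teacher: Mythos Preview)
Your proof is correct and follows essentially the same route as the paper: apply Lemma~\ref{lem:two} at each fixed $\tau>t_\epsilon$ with the cost $q_\tau=R_\tau+\sum_{i\le\tau}f_i$, bound the numerator of the resulting threshold by $(E+L)\tau$, and use Condition~\ref{cond:penalty1} to bound the denominator by $\beta\tau$, so that the $\tau$'s cancel and the single choice $\gamma>(E+L+1)/\beta$ works uniformly. Your write-up is in fact a bit more careful than the paper's---you explicitly argue that the minimiser of $\Phi_\tau$ must lie in $\hat X^{max}_\tau$ (since $\Phi_\tau$ attains $f^\star_\tau$ there), you correctly identify $q_\tau(z)-f^\star_\tau$ as the quantity to bound, and you flag the $t_\epsilon$ transient---none of which the paper spells out.
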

\begin{proof}
The result follows by application of Lemma \ref{lem:two} at times $\tau> t_\epsilon$ with $h(x)=R_\tau(x)+\sum_{i=1}^\tau f_i(x)$.   We have that $h(x)-h(z)=R_\tau(x)-R_\tau(z)+\sum_{i=1}^\tau (f_i(x)-f_i(z)) \le E\tau +L\tau$.  Hence for $x_{\tau+1}\in  \hat{X}^{max}_\tau$ it is sufficient to choose:
\[
\gamma>\gamma_0= \frac{(E+L)\tau-1}{k_{\tau}\eta}\le \frac{E+L+1/\tau}{\beta}\le \frac{E+L+1}{\beta}.
\]   
When Condition \ref{cond:penalty1} holds, $\beta>0$.   
\end{proof}

Theorem \ref{th:constraint} states a lower bound on $\gamma$ in terms of constants $E$, $L$ and $\beta$.   For a quadratic regulariser $R_\tau(x)=\sqrt{\tau}\|x\|_2^2$ we can choose $E=\max_{y,z\in D}(\|y\|_2^2-\|z\|_2^2)$.   Since functions $f_i$ are uniformly Lipschitz then $|f_i(z)-f_i(y)|\le L_f\|z-y\| \le L_f\|D\|$ and so we can choose $L=L_f\|D\|$.  A value for $\beta$ may be unknown but to apply Theorem \ref{th:constraint} in practice we just need to select $\gamma$ large enough, so a pragmatic approach is simply to make $\gamma$ grow with time and then freeze it when it is large enough i.e. when the constraint violations are observed to cease.   

\subsection{Main Result: Penalised FTRL $O(\sqrt{t})$ Regret \& Violation}\label{sec:main}

Our main result extends the standard FTRL analysis to time-varying constraints:
\begin{theorem}[Penalised FTRL]\label{th:itworks}
Assume Conditions \ref{cond:ftrl} and \ref{cond:penalty1} hold for $F_i(x)=f_i(x)+\gamma h_i(x)$ with $h_i(x)=\sum_{j=1}^m\max\{0,\frac{1}{i}\sum_{k=1}^i g_k^{(j)}(x)\}$, and the constraints $g_i^{(j)}$ are uniformly Lipschitz. Let the sequence of actions $\{x_\tau\}_{\tau=1}^t$ be generated by the Penalised FTRL update:
\begin{align}
x_{\tau+1} \in \arg\min_{x\in D} R_\tau(x)+\sum_{i=1}^\tau F_i(x)\label{eq:th_ftrl}
\end{align}
Then, if $\gamma$ is sufficiently large, the regret and constraint violation satisfy:
\begin{align*}
&\R_t := \sum_{i=1}^t f_i(x_i) - f_i(y) \le  O(\sqrt{t}), \qquad
\V_t :=  \sum_{i=1}^{t} h_i(x_{i}) \le O(\sqrt{t}), \quad \forall y\in \hat{X}^{max}_t \\
&\hat{X}^{max}_t\!\!=\! \Big\{x\in D: \sum_{k=1}^{i} g_k^{(j)}(x)\!\le\! 0, \forall i\leq t, j\leq m \Big\}\!\!=\!\Big\{x\in D: \sum_{k=1}^i h_k(x)\!=\! 0, \forall i\leq t \Big\}
\end{align*}
 
\end{theorem}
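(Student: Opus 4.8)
The plan is to assemble the result from three ingredients already in hand: the FTRL regret bound of Lemma~\ref{th:ftrl}, the tail-feasibility of the iterates from Theorem~\ref{th:constraint}, and the uniform Lipschitzness (hence boundedness on the bounded set $D$) of the constraints $g_i^{(j)}$. First I fix $\gamma$ large enough that Theorem~\ref{th:constraint} applies. Since Condition~\ref{cond:ftrl} is assumed for $F_i=f_i+\gamma h_i$ --- and each $h_i$ is uniformly $mL_g$-Lipschitz because of the $1/i$ prefactor together with Lemma~\ref{lem:maxlips}, so each $F_i$ is uniformly $(L_f+\gamma m L_g)$-Lipschitz --- Lemma~\ref{th:ftrl} gives $\sum_{i=1}^t\big(F_i(x_i)-F_i(y)\big)\le O(\sqrt t)$ for every $y\in D$, in particular for every $y\in\hat X^{max}_t\subseteq D$. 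For such $y$ we have $\tfrac{1}{i}\sum_{k=1}^i g_k^{(j)}(y)\le 0$ for all $i\le t$, $j\le m$, hence $h_i(y)=0$ and $F_i(y)=f_i(y)$; substituting and rearranging yields $\R_t+\gamma\sum_{i=1}^t h_i(x_i)\le O(\sqrt t)$. Since each $h_i\ge 0$ and $\gamma>0$, the penalty sum is nonnegative, so $\R_t\le O(\sqrt t)$ follows immediately. (The $O(\sqrt t)$ constant grows with $\gamma$ through the Lipschitz constant of $F_i$, but the rate is unaffected, which is why ``sufficiently large $\gamma$'' costs nothing asymptotically.)

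It remains to bound $\V_t=\sum_{i=1}^t h_i(x_i)$ on its own --- the displayed inequality is useless for this since nothing prevents $\R_t$ from being very negative. Here I invoke Theorem~\ref{th:constraint}: with $\gamma$ large enough, $x_{\tau+1}\in\hat X^{max}_\tau$ for all $\tau>t_\epsilon$, equivalently $x_\tau\in\hat X^{max}_{\tau-1}$ for all $\tau>t_\epsilon+1$. Split $\V_t=\sum_{i\le t_\epsilon+1}h_i(x_i)+\sum_{i>t_\epsilon+1}h_i(x_i)$. The first (burn-in) sum has a fixed, finite number of terms, each finite because $h_i\le mG$ on $D$ (the $g_i^{(j)}$ being uniformly Lipschitz on bounded $D$ are uniformly bounded there, $|g_i^{(j)}|\le G$), so it is a constant independent of $t$. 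For the tail, $x_\tau\in\hat X^{max}_{\tau-1}$ forces $\sum_{k=1}^{\tau-1}g_k^{(j)}(x_\tau)\le 0$; adding the single leftover term $g_\tau^{(j)}(x_\tau)\le G$ and dividing by $\tau$ gives $\tfrac{1}{\tau}\sum_{k=1}^\tau g_k^{(j)}(x_\tau)\le G/\tau$, whence $h_\tau(x_\tau)\le mG/\tau$. Summing, $\sum_{i>t_\epsilon+1}h_i(x_i)\le mG\sum_{\tau=1}^{t}\tfrac{1}{\tau}\le mG(1+\ln t)$, so $\V_t\le O(\log t)\le O(\sqrt t)$. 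The asserted equivalence of the two descriptions of $\hat X^{max}_t$ is immediate from nonnegativity: $\sum_{k=1}^i h_k(x)$ is a sum of nonnegative terms, so it equals $0$ for all $i\le t$ iff each term vanishes, i.e. iff $\sum_{l=1}^k g_l^{(j)}(x)\le 0$ for all $k\le t$, $j\le m$.

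\textbf{Main obstacle.} The regret half is essentially free once one notices that $h_i$ vanishes on the benchmark and is nonnegative at the played actions; the substance lies entirely in the violation bound. The one genuinely non-routine point there is that Theorem~\ref{th:constraint} certifies the tail iterate $x_\tau$ only against the constraints up to time $\tau-1$, not time $\tau$, whereas $h_\tau(x_\tau)$ involves the average over all of $1,\dots,\tau$. Absorbing that single extra constraint $g_\tau^{(j)}(x_\tau)$ via its uniform bound is precisely what produces the $1/\tau$ decay of $h_\tau(x_\tau)$ and hence a summable (in fact $O(\log t)$) total violation; without this observation one would only obtain a trivial $O(t)$ bound on the tail. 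A secondary point to keep straight is that enlarging $\gamma$ to meet Theorem~\ref{th:constraint} inflates the constant in the FTRL regret bound of Lemma~\ref{th:ftrl} but leaves the $O(\sqrt t)$ rate, and hence the statement, intact.
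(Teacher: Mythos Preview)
Your regret argument is essentially identical to the paper's. For the violation bound, however, you take a genuinely different and, under an extra hypothesis, sharper route. The paper argues via Be-The-Leader: since $x_{\tau+1}\in\hat{X}^{max}_\tau=\arg\min_{x\in D}\sum_{i\le\tau}h_i(x)$, Lemma~\ref{lem:btl} gives $\sum_i h_i(x_{i+1})\le\sum_i h_i(y)=0$ for $y\in\hat{X}^{max}_t$, hence $\V_t\le\sum_i\big(h_i(x_i)-h_i(x_{i+1})\big)$; each summand is then bounded by $mL_g\|x_i-x_{i+1}\|=O(1/\sqrt i)$ via uniform Lipschitzness of $h_i$ and Lemma~\ref{lem:reg}, yielding $\V_t\le O(\sqrt t)$. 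You instead bound $h_\tau(x_\tau)$ term-by-term by $mG/\tau$, which (if it goes through) gives the tighter $\V_t\le O(\log t)$.

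There is, however, a genuine gap: your claim that ``the $g_i^{(j)}$ being uniformly Lipschitz on bounded $D$ are uniformly bounded there'' is false as stated. A \emph{family} of uniformly Lipschitz functions on a bounded set need not be uniformly bounded---take $g_i(x)\equiv i$, which is $0$-Lipschitz for every $i$ yet $\sup_i\sup_{x\in D}|g_i(x)|=\infty$. Uniform Lipschitzness controls $|g_i^{(j)}(x)-g_i^{(j)}(x_0)|\le L_g\operatorname{diam}(D)$, but you still need a uniform-in-$i$ bound at some anchor point $x_0$, and nothing in Conditions~\ref{cond:ftrl}--\ref{cond:penalty1} supplies one. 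The paper's route sidesteps this because it only ever uses differences $h_i(x_i)-h_i(x_{i+1})$, never pointwise values of $g_i^{(j)}$. Your argument is easily repaired in either of two ways: (i) add uniform boundedness of the $g_i^{(j)}$ as an explicit extra hypothesis (it holds in all of the paper's examples and is standard in this literature), in which case your $O(\log t)$ bound stands and is an improvement; or (ii) observe directly that $h_\tau(x_{\tau+1})=0$ since $x_{\tau+1}\in\hat{X}^{max}_\tau$, whence $h_\tau(x_\tau)\le h_\tau(x_{\tau+1})+mL_g\|x_\tau-x_{\tau+1}\|=O(1/\sqrt\tau)$ by Lemma~\ref{lem:reg}, recovering the paper's $O(\sqrt t)$ without any boundedness assumption.
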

\begin{proof}
\textit{Regret}: Applying Lemma \ref{th:ftrl} then $\sum_{i=1}^t F_i(x_i) - F_i(y) \le O(\sqrt{t})$ for all $y\in D$.   This holds in particular for all $y\in \hat{X}^{max}_t$ and for these points $\sum_{i=1}^t  F_i(y) = \sum_{i=1}^t  f_i(y)$.  Therefore, $\sum_{i=1}^t F_i(x_i) - f_i(y)  \le O(\sqrt{t})$ i.e. $\R_t=\sum_{i=1}^t f_i(x_i) - f_i(y)  \le \ O(\sqrt{t})-\gamma \sum_{i=1}^t h_i(x_i) \le O(\sqrt{t})$ since $h_i(x_i)\ge 0$.     

\textit{Constraint Violation}:  By Theorem \ref{th:constraint}, $x_{\tau+1}\in \hat{X}^{max}_\tau$ for $\tau>t_\epsilon$.   Our interest is in bounding the violation of $\hat{X}^{max}_{\tau+1}$ by $x_{\tau+1}$.  We can ignore the finite interval from 1 to $t_\epsilon$ since it will incur at most a finite constraint violation and so not affect an $O(\sqrt{t})$ bound i.e. when obtaining the $O(\sqrt{t})$ bound we can take $t_\epsilon=0$.   We follow a ``Be-The-Leader'' type of approach and apply Lemma \ref{lem:btl} with ${F}_i(x)=h_i(x)$.  We have that $h_i(x)\ge 0$ and by Condition \ref{cond:penalty1}, there exists a Slater point $z\in D$ such that $h_i(z)=0$, $i=1,\dots,t$.  Hence, $\min_{x\in D} \sum_{i=1}^\tau {F}_i(x)\!=\!0$ and $\arg\min_{x\in D} \sum_{i=1}^\tau {F}_i(x)$ is not empty.  Now,  $x_{\tau+1}\in \hat{X}^{max}_\tau\!=\!\big\{x\in D: \sum_{i=1}^\tau h_i(x)= 0\big\}=\arg\min_{x\in D} \sum_{i=1}^\tau {h}_i(x)$ i.e. $x_{\tau+1}$ is a Follow-The-Leader update with respect to $\sum_{i=1}^\tau {h}_i(x)$.   Hence, by Lemma \ref{lem:btl}, it is $ \sum_{i=1}^{t}h_i(y)  \!\ge \! \sum_{i=1}^{t}h_i(x_{i+1})$, $\forall y\!\in\! D$.  Multiplying both sides of this inequality by -1 and adding $\sum_{i=1}^{t}h_i(x_{i})$, it follows that:
$$
\sum_{i=1}^{t}  \Big(h_i(x_{i}) -  h_i(y) \Big) \le \sum_{i=1}^{t} \Big(h_i(x_{i})- h_i(x_{i+1})\Big) \,\,\,\forall y\in D.
$$
In particular, for $y\in \hat{X}^{max}_t$ then $\sum_{i=1}^{t}h_i(y)=0$ and so
$$
\V_t =\sum_{i=1}^{t} h_i(x_{i}) \le \sum_{i=1}^{t} \Big(h_i(x_{i})-h_i(x_{i+1})\Big).
$$
Since $g_i^{(j)}$ is uniformly Lipschitz then by Lemma \ref{lem:maxlips}, we get that $h_i$ is uniformly Lipschitz, i.e. $|h_i(x_{i})-h_i(x_{i+1})| \le L_g\|x_i-x_{i+1}\|$ and $\V_t \le L_g \sum_{i=1}^{t}\|x_i-x_{i+1}\|$, where $L_g$ is the Lipschitz constant. Since the regularizer $R_\tau(x)$ in the Penalized FTRL update is $\sqrt{\tau}$-strongly convex, by Lemma \ref{lem:reg} we get that $\|x_i-x_{i+1}\|$ is $O(1/\sqrt{i})$ and so $\sum_{i=1}^{t}\|x_i-x_{i+1}\|$ is $O(\sqrt{t})$.  Hence, $\V_t \le O(\sqrt{t})$ as claimed.
\end{proof}

We can immediately generalize Theorem \ref{th:itworks} by observing that a sequence of constraints $\{g_i^{(j)}\}$ which are active at no more than $O(\sqrt{t})$ time steps can be violated while still maintaining $O(\sqrt{t})$ overall sum-violation.  

\begin{corollary}[Relaxation]\label{th:itworks2}
Define the sets
\[
P_-=\{j:\sum_{i=1}^t\max\{0,\frac{1}{i}\sum_{k=1}^i g_k^{(j)}(x)\}\le O(\sqrt{t})\},\,\,\,\text{and} \,\,\,\, P_+=\{1,\dots,m\}\setminus P_-.
\]
In Theorem \ref{th:itworks} relax Condition \ref{cond:penalty1} so that it only holds for the subset $P_+$ of constraints.   Then the Penalised FTRL update still ensures $O(\sqrt{t})$ regret and constraint violation with respect to:
\[
 \hat{X}^{max}_t=\Big\{x\in D: \sum_{k=1}^{i} g_k^{(j)}(x)\le 0, i=1,\dots,t,j\in P_+\Big\}.
 \]
\end{corollary}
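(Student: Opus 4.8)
The plan is to follow the proof of Theorem~\ref{th:itworks} almost verbatim, after splitting the penalty into its $P_+$ and $P_-$ contributions and showing that the $P_-$ part is harmless. Write $h_i(x)=h_i^{+}(x)+h_i^{-}(x)$, where $h_i^{+}(x):=\sum_{j\in P_+}\max\{0,\tfrac1i\sum_{k=1}^i g_k^{(j)}(x)\}$ and $h_i^{-}(x)$ is the analogous sum over $j\in P_-$. Each summand $\max\{0,\tfrac1i\sum_{k=1}^i g_k^{(j)}(x)\}$ is uniformly $L_g$-Lipschitz (the $1/i$ prefactor cancels the growth of the sum, and Lemma~\ref{lem:maxlips} handles the $\max$), so $h_i^{+}$, $h_i^{-}$ and hence $F_i=f_i+\gamma h_i$ remain uniformly Lipschitz regardless of the $P_-/P_+$ split. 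Thus Condition~\ref{cond:ftrl} is unaffected, and Lemma~\ref{th:ftrl} still gives $\sum_{i=1}^t\big(F_i(x_i)-F_i(y)\big)\le O(\sqrt t)$ for all $y\in D$.

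For the regret, take $y$ in the relaxed set $\hat X^{max}_t$. Then $h_i^{+}(y)=0$, so $\sum_{i=1}^t F_i(y)=\sum_{i=1}^t f_i(y)+\gamma\sum_{i=1}^t h_i^{-}(y)$, and by the defining property of $P_-$ the last sum is $O(\sqrt t)$. Combining with $\sum_{i=1}^t F_i(x_i)\ge\sum_{i=1}^t f_i(x_i)$ (since $h_i(x_i)\ge0$) yields $\R_t=\sum_{i=1}^t\big(f_i(x_i)-f_i(y)\big)\le\big[\sum_{i=1}^t\big(F_i(x_i)-F_i(y)\big)\big]+\gamma\,O(\sqrt t)\le O(\sqrt t)$.

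For the violation, decompose $\V_t=\sum_{i=1}^t h_i(x_i)=\sum_{i=1}^t h_i^{+}(x_i)+\sum_{i=1}^t h_i^{-}(x_i)$; the second sum is $O(\sqrt t)$ directly by the definition of $P_-$, so it suffices to bound $\sum_{i=1}^t h_i^{+}(x_i)$. The key step is the analogue of Theorem~\ref{th:constraint}: that $x_{\tau+1}$ lies in the relaxed $\hat X^{max}_\tau=\{x\in D:\tfrac1i\sum_{k=1}^i g_k^{(j)}(x)\le0,\ i\le\tau,\ j\in P_+\}$ for $\tau$ large. To obtain this, rewrite $\sum_{i=1}^\tau F_i(x)=\tilde f_\tau(x)+\gamma\sum_{i=1}^\tau h_i^{+}(x)$ with $\tilde f_\tau(x):=R_\tau(x)+\sum_{i=1}^\tau f_i(x)+\gamma\sum_{i=1}^\tau h_i^{-}(x)$, which is convex, and apply Lemma~\ref{lem:two} to the convex program $\min_x\tilde f_\tau(x)$ subject to the $P_+$ constraints only: the relaxed Condition~\ref{cond:penalty1} supplies the common Slater point $z$ for $\{g^{(j)}\}_{j\in P_+}$ and the bound $k_\tau^{+}\eta\ge\beta\tau$ on the number of active $P_+$ constraints at the boundary. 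Since $\tilde f_\tau(x)-\tilde f_\tau(z)\le(E+L)\tau+\gamma\sum_{i=1}^\tau h_i^{-}(x)\le(E+L)\tau+\gamma\,O(\sqrt t)$ (using $h_i^{-}(z)\ge0$ and $\sum_{i=1}^\tau h_i^{-}(x)=O(\sqrt t)$ for all $x\in D$), the threshold of Lemma~\ref{lem:two} satisfies $\gamma_0\le\frac{(E+L)\tau+\gamma c\sqrt t-1}{\beta\tau}$, which is met as soon as $\gamma(\beta-c/\sqrt t)>E+L-1/\tau$; the right-hand side tends to $\frac{E+L}{\beta}$, so any sufficiently large fixed $\gamma$ works for all $\tau$ beyond a finite $t'_\epsilon$, the prefix $\tau\le t'_\epsilon$ contributing only an $O(1)$ violation. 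Once $x_{\tau+1}\in\hat X^{max}_\tau=\arg\min_{x\in D}\sum_{i=1}^\tau h_i^{+}(x)$ is in hand (the minimum being $0$, attained at the Slater point $z$), the argument of Theorem~\ref{th:itworks} applies with $F_i$ replaced by $h_i^{+}$: Lemma~\ref{lem:btl} gives $\sum_{i=1}^t h_i^{+}(x_{i+1})\le\sum_{i=1}^t h_i^{+}(y)=0$ for $y\in\hat X^{max}_t$, so $\sum_{i=1}^t h_i^{+}(x_i)\le\sum_{i=1}^t\big(h_i^{+}(x_i)-h_i^{+}(x_{i+1})\big)\le L_g\sum_{i=1}^t\|x_i-x_{i+1}\|=O(\sqrt t)$ by Lemma~\ref{lem:maxlips}, the $\sqrt\tau$-strong convexity of $R_\tau$, and Lemma~\ref{lem:reg}. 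Adding the $P_-$ contribution gives $\V_t\le O(\sqrt t)$.

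The step I expect to be the main obstacle is the mild circularity in choosing $\gamma$ in the violation argument: $\gamma$ is the penalty multiplier we must take large, yet it also sits inside the surrogate objective $\tilde f_\tau$ through the absorbed $P_-$ penalty, inflating the threshold $\gamma_0$. This is resolved because that absorbed term grows only like $\gamma\cdot O(\sqrt t)=o(\gamma\tau)$, so after dividing by $k_\tau^{+}\eta\ge\beta\tau$ the threshold converges to the same $O((E+L)/\beta)$ bound as in Theorem~\ref{th:constraint}; the care required is just to make the choice of $\gamma$ uniform in $\tau$ and to fold the initial segment, where the bound has not yet taken effect, into the $O(1)$ slack already allowed by the $O(\sqrt t)$ target.
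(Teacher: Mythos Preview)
Your proposal is essentially the right way to prove the corollary and matches the paper's intent: the paper gives no formal proof, only the remark preceding the statement that constraints active on at most $O(\sqrt t)$ steps ``can be violated while still maintaining $O(\sqrt t)$ overall sum-violation,'' together with the sentence after the corollary that the $P_-$ part merely adds $O(\sqrt t)$ to the violation. Your decomposition $h_i=h_i^{+}+h_i^{-}$ and the rerun of the Theorem~\ref{th:itworks} argument on the $P_+$ part (with the $P_-$ penalty absorbed into the surrogate objective $\tilde f_\tau$) is exactly the natural fleshing-out of that remark.

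Two small bookkeeping slips are worth fixing. First, in the threshold inequality the correct rearrangement of $\gamma>\frac{(E+L)\tau+\gamma c\sqrt t-1}{\beta\tau}$ is $\gamma\bigl(\beta-\tfrac{c\sqrt t}{\tau}\bigr)>E+L-\tfrac1\tau$, not $\gamma(\beta-c/\sqrt t)>E+L-1/\tau$; these coincide only at $\tau=t$. Consequently the cutoff $t'_\epsilon$ at which the bound ``kicks in'' is of order $\sqrt t/\beta$, not $O(1)$: you need $\tau$ large enough that $c\sqrt t/\tau<\beta$, e.g.\ $\tau>2c\sqrt t/\beta$, after which any fixed $\gamma>2(E+L)/\beta$ suffices. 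The prefix then contributes $O(\sqrt t)\cdot O(1)=O(\sqrt t)$ (bounded per-step $h_i^{+}$ on bounded $D$), not $O(1)$ as you wrote; this is harmless for the final $O(\sqrt t)$ target but should be stated correctly. Second, your claim that $\sum_{i=1}^t h_i^{-}(x_i)=O(\sqrt t)$ ``directly by the definition of $P_-$'' implicitly reads the (free-variable) definition of $P_-$ as holding uniformly over $x\in D$ in a way strong enough to cover the time-varying arguments $x_i$; this is consistent with the paper's own example and informal discussion, but it is worth making that reading explicit since a bound on $\sup_{x}\sum_i\phi_i(x)$ does not in general control $\sum_i\phi_i(x_i)$.
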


In effect, Corollary \ref{th:itworks2} says that we only need Condition \ref{cond:penalty1} to hold for a \emph{subset} of the constraints (i.e. subset $P_+$).  The effect will be to increase the sum-violation, but only by $O(\sqrt{t})$.  This is the key advantage of the penalty-based approach, namely it allows a soft trade-off between sum-constraint satisfaction/violation, Condition \ref{cond:penalty1} and benchmark set $\hat{X}^{max}_t$. Importantly, note that the Penalised FTRL update itself remains unchanged and does not require knowledge of the partitioning of constraints into sets $P_+$ and $P_-$.

With this in mind,  it is worth noting that we also have the flexibility to partition the constraints in other ways.  For example:
\begin{corollary}\label{th:itworks3}
Consider the setup in Theorem \ref{th:itworks} but using penalty
$$
h_i(x)=\sum_{j=1}^m\max\left\{0,\frac{1}{i}\sum_{k=1}^i g_k^{(j)}(x)\right\}+\delta_i^{(j)}(x)
$$
Then the Penalised FTRL update ensures regret and violation
\begin{align*}
&\R_t := \sum_{i=1}^t \Big( f_i(x_i) - f_i(y)\Big) \le O(\sqrt{t}) - \sum_{i=1}^{t}\sum_{j=1}^m\Big(\delta_i^{(j)}(x_i)-\delta_i^{(j)}(y)\Big)\\
&\V_t :=  \sum_{i=1}^{t} h_i(x_{i}) \le O(\sqrt{t}) +  \sum_{i=1}^{t}\sum_{j=1}^{m}\delta_i^{(j)}(x)
\end{align*}
for all $y\in \hat{X}^{max}_t=\Big \{x\in D: \sum_{k=1}^{i} g_k^{(j)}(x)\le 0, i=1,\dots,t,j=1,\dots,m\Big\}$.
\end{corollary}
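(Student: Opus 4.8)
The plan is to treat the $\delta$-perturbed penalty as an ordinary instance of Penalised FTRL with a \emph{modified cost sequence}, so that Theorems~\ref{th:constraint} and~\ref{th:itworks} can be invoked almost as black boxes, and then to re-expand the resulting inequalities to isolate the extra $\delta$ terms. Concretely, write $h_i=\tilde h_i+\sum_{j=1}^m\delta_i^{(j)}$ with $\tilde h_i(x)=\sum_{j=1}^m\max\{0,\frac1i\sum_{k=1}^i g_k^{(j)}(x)\}$, and regroup the FTRL objective as $\sum_{i=1}^\tau F_i(x)=\sum_{i=1}^\tau \hat f_i(x)+\gamma\sum_{i=1}^\tau\tilde h_i(x)$ where $\hat f_i:=f_i+\gamma\sum_{j=1}^m\delta_i^{(j)}$. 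Assuming the $\delta_i^{(j)}$ are uniformly Lipschitz and leave $\sum_{i=1}^\tau F_i$ convex, Condition~\ref{cond:ftrl} holds for the pair $(\hat f_i,\gamma\tilde h_i)$, while Condition~\ref{cond:penalty1} is untouched because it constrains only the $g_i^{(j)}$ (equivalently the $\tilde h_i$), whose definition has not changed; in particular $\hat X^{max}_t$ is the same set as in Theorem~\ref{th:itworks}.

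First I would handle the regret. By Lemma~\ref{th:ftrl} applied to $F_i=\hat f_i+\gamma\tilde h_i$, we get $\sum_{i=1}^t (F_i(x_i)-F_i(y))\le O(\sqrt t)$ for all $y\in D$, hence for $y\in\hat X^{max}_t$; for such $y$ one has $\tilde h_i(y)=0$ for every $i\le t$, so expanding $F_i$ gives
\[
\sum_{i=1}^t\big(f_i(x_i)-f_i(y)\big)+\gamma\sum_{i=1}^t\tilde h_i(x_i)+\gamma\sum_{i=1}^t\sum_{j=1}^m\big(\delta_i^{(j)}(x_i)-\delta_i^{(j)}(y)\big)\le O(\sqrt t).
\]
Dropping the nonnegative term $\gamma\sum_i\tilde h_i(x_i)$ and rearranging yields the stated regret bound (carrying the factor $\gamma$, which can be absorbed into the $O(\sqrt t)$ term together with a rescaling of the $\delta_i^{(j)}$, or simply retained in the statement).

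Next, for the violation, I would re-run the Be-The-Leader argument of Theorem~\ref{th:itworks} verbatim, but only for the \emph{unperturbed} penalty $\tilde h_i$. The key input is that Theorem~\ref{th:constraint}, applied to $(\hat f_i,\gamma\tilde h_i)$, still gives $x_{\tau+1}\in\hat X^{max}_\tau$ for $\tau>t_\epsilon$ once $\gamma$ is large enough; since $\hat X^{max}_\tau=\arg\min_{x\in D}\sum_{i\le\tau}\tilde h_i(x)$ with minimum value $0$, the sequence $\{x_{\tau+1}\}$ is a Follow-The-Leader sequence for $\sum\tilde h_i$, so Lemma~\ref{lem:btl} gives $\sum_{i=1}^t\tilde h_i(x_{i+1})\le\sum_{i=1}^t\tilde h_i(y)=0$ for $y\in\hat X^{max}_t$, whence the telescoping bound $\sum_{i=1}^t\tilde h_i(x_i)\le\sum_{i=1}^t(\tilde h_i(x_i)-\tilde h_i(x_{i+1}))\le L_g\sum_{i=1}^t\|x_i-x_{i+1}\|$; uniform Lipschitzness of $\tilde h_i$ (Lemma~\ref{lem:maxlips}) and $\sqrt\tau$-strong convexity of $R_\tau$ (Lemma~\ref{lem:reg}) then give $\sum_{i=1}^t\tilde h_i(x_i)\le O(\sqrt t)$. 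Adding $\sum_{i=1}^t\sum_{j=1}^m\delta_i^{(j)}(x_i)$ to both sides produces $\V_t=\sum_{i=1}^t h_i(x_i)\le O(\sqrt t)+\sum_{i=1}^t\sum_{j=1}^m\delta_i^{(j)}(x_i)$, matching the claim.

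The main obstacle I anticipate is the bookkeeping needed to confirm that ``$\gamma$ large enough'' is still achievable after the modification: the perturbation contributes a term of order $\gamma$ to the effective cost constant $L$ appearing in Theorem~\ref{th:constraint}, so the threshold $\gamma>(E+L+1)/\beta$ becomes an inequality of the form $\gamma(1-c/\beta)>\text{const}$, which is solvable precisely when $\beta$ in Condition~\ref{cond:penalty1} exceeds the ($m$-weighted) Lipschitz constant of the $\delta_i^{(j)}$. Making this ``the $\delta$'s must be sufficiently mild'' hypothesis explicit, and checking it is consistent with convexity of $\sum_i F_i$, is the one place where genuine care is required; everything else is a direct re-expansion of the Theorem~\ref{th:itworks} inequalities.
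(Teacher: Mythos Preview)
Your approach is correct and is precisely the argument the paper has in mind: the corollary is stated without proof and is intended to follow by the same re-expansion of the Theorem~\ref{th:itworks} inequalities that you carry out, namely splitting $h_i=\tilde h_i+\sum_j\delta_i^{(j)}$, running the FTRL regret bound and the Be-The-Leader violation bound against the unperturbed $\tilde h_i$, and then reinserting the $\delta$-terms at the end.

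Two remarks. First, your observation that the regret inequality naturally carries a factor~$\gamma$ in front of the $\delta$-difference is accurate; the paper's displayed bound simply suppresses this constant (as it does elsewhere when absorbing constants into $O(\sqrt t)$), and similarly the ``$x$'' in the violation bound is a typo for~$x_i$. Second, the circularity you flag in applying Theorem~\ref{th:constraint}---that the effective Lipschitz constant $L$ for $\hat f_i=f_i+\gamma\sum_j\delta_i^{(j)}$ itself depends on~$\gamma$---is a genuine technical point that the paper glosses over. Your resolution (requiring the $\delta_i^{(j)}$ to have aggregate Lipschitz constant strictly below $\beta$, so that the inequality $\gamma>(E+L_f\|D\|+\gamma c+1)/\beta$ remains solvable) is exactly right, and in the paper's applications this is automatic because the $\delta_i^{(j)}$ are $O(1/\sqrt i)$-small perturbations whose Lipschitz contribution is negligible.
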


When $\delta_i^{(j)}\!\le\! O(1/\sqrt{t})$ then Corollary \ref{th:itworks3} shows that the Penalised FTRL update achieves $O(\sqrt{t})$ regret and violtion, this Corollary will prove useful in the next section.   Other variations of this sort are also possible.

\subsection{Necessity of Penalty Growth Condition}
Condition \ref{cond:penalty1} is necessary for Theorems \ref{th:constraint} and \ref{th:itworks} to hold in the sense that when the condition is violated then there exist examples where these theorems fail.

Returning again to the example from the Introduction, selecting $h_i(x)$ according to (\ref{eq:ftrl_pen}) then $h_i(x)=\max\{0,-0.01\} +\max\{0,\frac{n_{2,i}}{i} x\}=\max\{0,\frac{n_{2,i}}{i} x\}$.   Hence, the penalty $\sum_{i=1}^\tau h_i(x)\le \sum_{i=1}^\tau\frac{n_{2,i}}{i}x$.  When $n_{2,i}<O(i)$ then $\sum_{i=1}^\tau h_i(x)<O(\tau)$ (since $\sum_{i=1}^\tau \frac{1}{i^{c}} \le \int_0^\tau \frac{1}{i^{c}} di = \frac{\tau^{1-c}}{1-c}$ for $0\le c\le1$) and Condition \ref{cond:penalty1} is violated (since $k_\tau\le n_{2,\tau}<O(\tau)$ and so there does not exist any $\beta>0$ such that $k_\tau \ge \frac{\beta}{\eta}\tau$).  For $\tau$ large enough the penalty $\sum_{i=1}^\tau h_i(x)$ therefore inevitably becomes small relative to $\sum_{i=1}^\tau f_i(x) = -2\tau x$, which leads to persistent violation of constraint $x\le 0$ i.e. Theorem \ref{th:constraint} fails.   This is what we see in Figure \ref{fig:ex}(a).    

When $n_{2,i}\le O(\sqrt{i})$ then $\frac{n_{2,i}}{i}\le O(1/\sqrt{i})$ and the constraint sum-violation $\sum_{i=1}^\tau h_i(x)\le O(\sqrt{i})$.  Hence, Corollary \ref{th:itworks2} still works even though Theorem \ref{th:constraint} fails.  However, when $n_{2,i}$ greater than $O(\sqrt{i})$ but less than $O(i)$ then the constraint violation is greater than $O(\sqrt{i})$ and so Corollary \ref{th:itworks2} also fails.

It is worth noting that while we might consider gaining penalty growth by scaling $\gamma$ with $t$ this in inadmissible because Condition \ref{cond:ftrl} requires $F_t(x)=f_t(x)+\gamma h_t(x)$ to be uniformly Lipschitz i.e. for the same Lipschitz constant to apply at all times $t$.  

\subsection{Constraints Satisfying Penalty Growth Condition} 
A natural question to ask is which classes of time-varying constraints satisfy Condition \ref{cond:penalty1}.  In this section we present some useful examples.  In particular, we consider the classes of constraints considered by~\cite{pmlr-v108-valls20a} and~\cite{yu2017}, since these are the only previous works for time-varying constraints that report $\R_t, \V_t=O(\sqrt{t})$.

\subsubsection{Perturbed Constraints}
In~\cite{pmlr-v108-valls20a} the considered constraints are of the form:
$$
g_i^{(j)}(x) = g^{(j)}(x) + b_i^{(j)}
$$
with common Slater point and $b_i^{(j)}$ upper bounded by some value, i.e., $b_i^{(j)} \! \le \! \bar{b}^{(j)}, \forall i$.  For this class of constraints we have that:
\begin{align*}
h_i(x)&=\sum_{j=1}^m\max\left\{0,\frac{1}{i}\sum_{k=1}^i (g^{(j)}(x) + b_k^{(j)})\right\} 
=  \sum_{j=1}^m\max\left\{0,g^{(j)}(x)+\frac{1}{i}\sum_{k=1}^i b_k^{(j)}\right\}
\end{align*}
Defining $\underline{b}^{(j)}_t=\frac{1}{t}\sum_{k=1}^t b_k^{(j)}$ and $\Delta_i^{(j)}(x)=\frac{1}{i}\sum_{k=1}^i (b_k^{(j)}-\underline{b}_t)$, then we can rewrite the penalty equivalently as
$$
h_i(x)=\sum_{j=1}^m\max\left\{0,g^{(j)}(x)+\underline{b}_t^{(j)}\right\}+\delta_i^{(j)}(x)
$$
with
$\delta_i^{(j)}(x)=\max\big\{0,g^{(j)}(x)+\underline{b}_t^{(j)} + \Delta_i^{(j)}(x)\big\}-\max\big\{0,g^{(j)}(x)+\underline{b}_t^{(j)}\big \}$.  When $|\Delta_i^{(j)}(x)|$ is $O(1/\sqrt{i})$ then, by Lemma \ref{lem:maxlips}, so is $|\delta_i^{(j)}(x)|$.   Hence, when $|\Delta_i^{(j)}(x)|$ is $O(1/\sqrt{i})$ then we can use the fact that Condition \ref{cond:penalty1} holds for constraints $g^{(j)}(x)+\underline{b}^{(j)}_t\le 0$ to show, by Corollary \ref{th:itworks3}, that the Penalised FTRL update achieves $O(\sqrt{t})$ regret and violation with respect to benchmark set $\hat{X}_t^{max}=\{x:g^{(j)}(x)+\underline{b}^{(j)}_t\le 0\}$.  This corresponds to one extreme of~\cite{pmlr-v108-valls20a}'s benchmark but Theorem \ref{th:itworks2} provides more general conditions under which it is applicable, while \cite{pmlr-v108-valls20a} only considers constraints that are either time-invariant or i.i.d.

Alternatively, defining $\Delta_i^{(j)}(x)=\frac{1}{i}\sum_{k=1}^i (b_k^{(j)}-\bar{b}^{(j)})$ and we can rewrite the penalty equivalently as
$$
h_i(x)=\sum_{j=1}^m\max\left\{0,g^{(j)}(x)+\bar{b}^{(j)}\right\}+\delta_i^{(j)}(x)
$$
with 
$\delta_i^{(j)}(x)=\max\{0,g^{(j)}(x)+\bar{b}^{(j)} + \Delta_i^{(j)}(x)\}-\max\{0,g^{(j)}(x)+\bar{b}^{(j)} \}$.  Observe that $\delta_i^{(j)}(x)\le0$ since $\Delta_i^{(j)}(x)\le 0$.  Hence, $\delta_i^{(j)}(x)$ does not add to the upper bound on the sum-constraint violation and so, by Corollary \ref{th:itworks3}, that the Penalised FTRL update achieves $O(\sqrt{t})$ regret and violation with respect to benchmark set $\hat{X}_t^{max}=\{x:g^{(j)}(x)+\bar{b}^{(j)}\le 0\}$.  This corresponds to the other extreme of~\cite{pmlr-v108-valls20a}'s benchmark, and in fact corresponds to the weak benchmark $X_t^{min}$ and so is perhaps less interesting.

\subsubsection{Families Of Constraints } 

Suppose the time-varying constraint functions $g_{i}^{(j)}$ are selected from some family.  That is, let $A^{(j)}=\{a_{1}^{(j)},\dots,a_{n_j}^{(j)}\}$ be a family of functions indexed by $k=1,\dots,n_j$ with $a_{k}^{(j)}:D\rightarrow\mathbb{R}$ being $L_g$-Lipschitz and $|a_{k}^{(j)}(x)|\le a_{max}$ for all $x\in D$.  At time $i$, constraint $g_{i}^{(j)}=a_{k}^{(j)}$ for some $k\in\{1,\dots,n_j\}$, i.e. at each time step the constraint $g_{i}^{(j)}$ is selected from family $A^{(j)}$.  Let $n_{k,\tau}^{(j)}$ denote the number of times that function $a_{k}^{(j)}$is visited up to time $\tau$ and $p_{k,\tau}^{(j)}=n_{k,\tau}^{(j)}/\tau$ the fraction of times that $a_{k}^{(j)}$ is visited.  With this setup the penalty is:
\begin{align*}
h_i(x)&=\sum_{j=1}^m\max\left\{0,\frac{1}{i}\sum_{k=1}^i g_{i}^{(j)}\right\}
=\sum_{j=1}^m\max\left\{0,\sum_{k=1}^{n_j} p_{k,i}^{(j)} a_{k}^{(j)}(x) \right\}
\end{align*}
We proceed by rewriting the penalty equivalently as
$$
h_i(x)=\sum_{j=1}^m\max\left\{0,\sum_{k=1}^{n_j} p_{k}^{(j)} a_{k}^{(j)}(x) \right \}+\delta_i^{(j)}(x)
$$
with
$
\delta_i^{(j)}(x) = \max\big\{0,\sum_{k=1}^{n_j} p_{k,i}^{(j)} a_{k}^{(j)}(x)\big \}
-\max\big\{0,\sum_{k=1}^{n_j} p_{k}^{(j)} a_{k}^{(j)}(x)\big \}
$
By Lemma \ref{lem:maxlips}, $|\delta_i^{(j)}(x)|\le |\sum_{k=1}^{n_j} (p_{k,i}^{(j)}-p_{k}^{(j)} ) a_{k}^{(j)}(x)|$.   Assume the following condition holds:

\begin{condition}[$1/\sqrt{t}$-Convergence]\label{cond:sqrt}
For $\epsilon>0$ there exists $t_0>0$ and $0\le p_{k}^{(j)}\le 1$, $\sum_{j=1}^m\sum_{k=1}^{n_j}p_{k}^{(j)}=1$ such that $|p_{k,\tau}^{(j)}- p_{k}^{(j)}|\le\epsilon/\sqrt{\tau}$ for all $\tau>t_0$.
\end{condition}
Then for all $\tau>t_0$,
$
|\delta_i^{(j)}(x)|\le n_j \frac{\epsilon}{\sqrt{\tau}}a_{max}\le \bar{n} \frac{\epsilon}{\sqrt{\tau}}a_{max}
$
with $\bar{n} :=\max_j n_j$.  By Corollary \ref{th:itworks3} it now follows that Penalised FTRL achieves $O(\sqrt{t})$ regret and violation with respect to benchmark $\hat{X}^{max}_t=\big\{x:\sum_{k=1}^{n_j} p_{k}^{(j)} a_{k}^{(j)}(x) \le 0
\big\}$.  Observe that in this case $\hat{X}^{max}_t={X}^{max}_\infty$, i.e., we obtain $O(\sqrt{t})$ regret and violation with respect to the strong benchmark, which is very appealing.    Note that we don't need to know the relative frequencies in advance for this analysis to work.

\subsubsection{Example}
Suppose $D\!=\![-10,10]$, loss function $f_\tau(x)\!=\!-2x$ and constraint $g_\tau(x)$ alternates between $a_1(x)\!=\!-0.01$ and $a_2(x)\!=\!x$, equaling $a_2(x)$ at time $\tau$ with probability\footnote{Recall that $c\sum_{\tau=0}^t \frac{1}{\tau^{1-c}} \approx c\int_0^t \frac{1}{\tau^{1-c}} d\tau = t^{c}$ for $0\le c\le1$.   Hence, with this choice $E[n_{2,t}]\approx 0.1 t^{c}$ and $E[p_{2,t}]\approx 0.1t^{c-1}$.\label{foot:bound}} $0.1c/\tau^{1-c}$.   Figure \ref{fig:ex}(a) shows the performance vs $c$ of the Penalised FTRL update with quadratic regulariser $R_\tau(x)=\sqrt{\tau} x^2$ and $F_\tau(x)=f_\tau(x)+\gamma\max\{0,p_{1,\tau}a_1(x) + p_{2,\tau}a_2(x)\}$ with parameter $\gamma=25$.   It can be seen that for $c=1$ and $c=0.5$ the constraint violation is well-behaved, staying close to zero, but for $c\!\in\!(0.5,1)$ the constraint violation grows with time.

What is happening here is that when $c\!=\!1$ then $p_{1,\tau}\rightarrow0.9$, $p_{2,\tau}\rightarrow0.1$ and the penalty term $\gamma\max\{0,p_{1,\tau}a_1(x) + p_{2,\tau}a_2(x)\}$ in  $F_\tau(x)$ ensures the violation $\sum_{i=1}^t g_i(x) =t(p_{1,t}a_1(x) + p_{2,t}a_2(x))$ stays small.  When $c=0.5$, then $p_{1,\tau}\rightarrow1$, $p_{2,\tau}\rightarrow0$ and the penalty term ensures $tp_{1,t}a_1(x)$ stays small while $t p_{2,t}a_2(x)$ is $O(\sqrt{t})$, thus $\sum_{i=1}^t g_i(x)$ is $O(\sqrt{t})$.   When $c\!\in (0.5,1)$ then again $p_{1,\tau}\rightarrow1$, $p_{2,\tau}\rightarrow0$ and the penalty term ensures $tp_{1,t}a_1(x)$ stays small but now $tp_{2,t}a_2(x)$ is larger than $O(\sqrt{t})$ and so $\sum_{i=1}^t g_i(x)$ is also larger than $O(\sqrt{t})$.  

We claim that $1/\sqrt{t}$-convergence is sufficient for Penalised FTRL to achieve $O(\sqrt{t})$ regret and violation with respect to $X^*$, but it remains an open question whether or not it is also a necessary condition.  Nevertheless, in simulations we observe that when $1/\sqrt{t}$-convergence does not hold then performance is often poor and that this is not specific to the FTRL algorithm, e.g. Figure \ref{fig:ex}(b) illustrates the performance of the canonical online primal-dual update (e.g. see~\cite{pmlr-v108-valls20a}),
\begin{align}
x_{t+1}&=\Pi_D\left(x_t - \alpha_t (\partial f_t(x_t) + \lambda_t\partial g_t(x_t))\right),\  
\lambda_{t+1} = \Big[\lambda_t + \alpha_tg_t(x_{t+1}\Big]^+\label{eq:pm1}
\end{align}
where $\Pi_D$ denotes projection onto set $D$ and step size $\alpha_t=5/\sqrt{t}$.  

\begin{figure}
\centering
\subfigure[Penalised FTRL]{
\includegraphics[width=0.45\columnwidth]{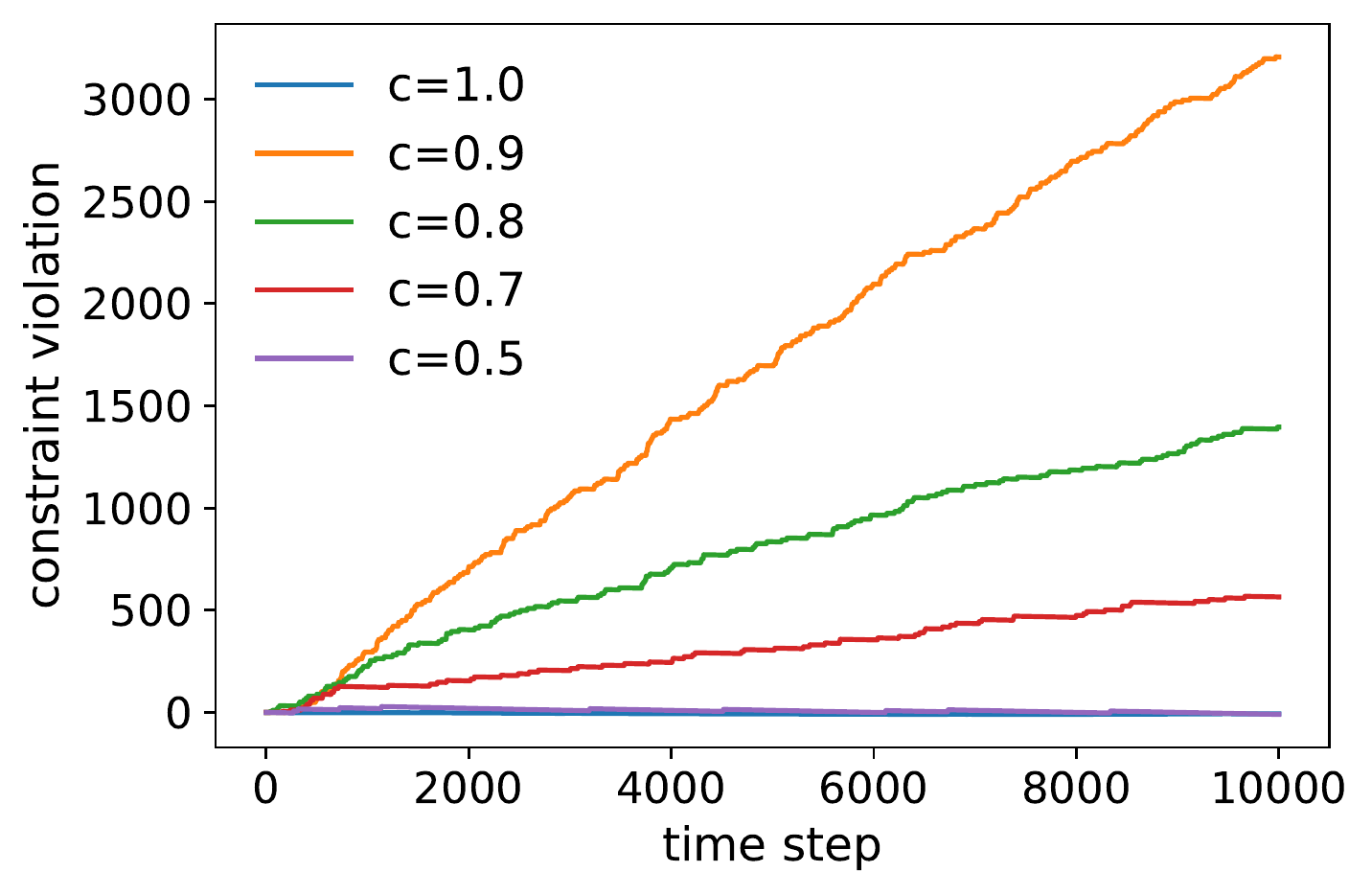}
}
\subfigure[Primal-Dual]{
\includegraphics[width=0.45\columnwidth]{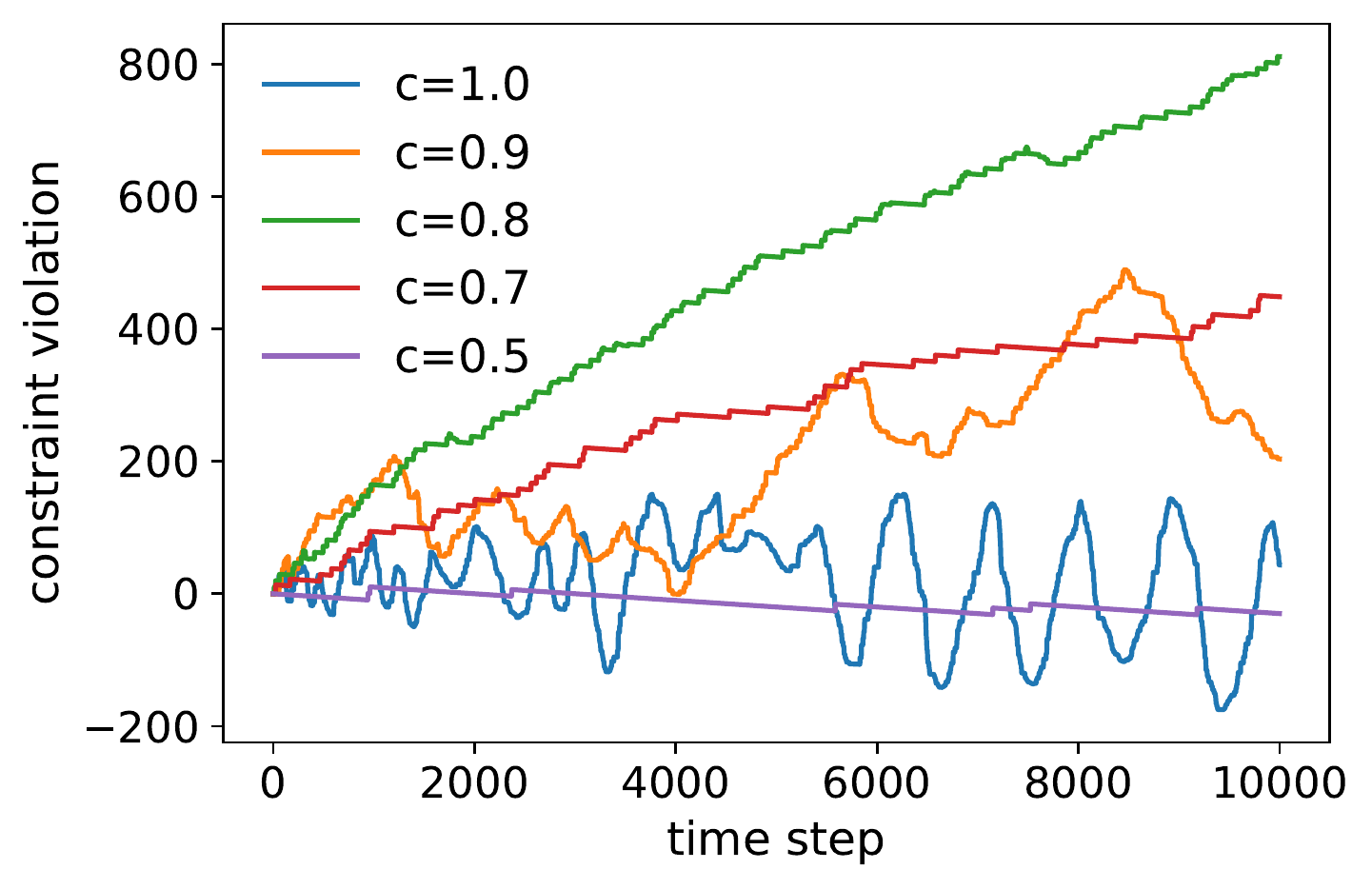}
}
\vspace{-2mm}
\caption{Example about the role of $1/\sqrt{t}$-convergence in achieving $\V_t=O(\sqrt{t})$.}\label{fig:ex}
\vspace{-2mm}
\end{figure}

\subsubsection{I.i.d Stochastic Constraints}
In~\cite{yu2017} i.i.d. constraint functions drawn from a family are considered and a primal-dual algorithm is presented that achieves $O(\sqrt{t})$ regret and expected violation.  Since with high probability the empirical mean converges at rate $1/\sqrt{t}$ with high probability we can immediately apply the foregoing analysis to the sample paths to show that Penalised FTRL achieves $O(\sqrt{t})$ regret and violation with respect to ${X}^{max}_t$ with high probability.  In more detail, let indicator random variiable $I^{j}_{k,i}=1$ when constraint function $a_k^{(j)}$ is selected at time $i$, and otherwise  $I^{(j)}_{k,i}=0$. By the law of large numbers (we can use any convenient concentration inequality, e.g. Chebyshev), with high probabilty the emprical mean satisfies $|\frac{1}{\tau}\sum_{i=1}^\tau I^{(j)}_{k,i} - p^{j}_{k}]|\le 1/\sqrt{\tau}$ with high probability.  That is, Condition \ref{cond:sqrt} holds with high probability and we are done.

\subsubsection{Periodic Constraints}
Let indicator $I^{j}_{k,i}=1$ when constraint function $a_k^{(j)}$ is selected at time $i$, and otherwise  $I^{(j)}_{k,i}=0$.  When the constraints are visitied in a periodic fashion then 
$$I^{(j)}_{k,i}=\begin{cases}
1 & i=nT_k^{(j)}, n=1,2,\dots\\
0 & \text{otherwise}
\end{cases}
$$
where $T_k^{(j)}$ is the period of constraint $a_k^{(j)}$.   Then 
$
|\frac{1}{\tau}\sum_{i=1}^\tau I^{(j)}_{k,i} -\frac{1}{T_k^{(j)}}|
=\frac{1}{\tau}| \lfloor \frac{\tau}{T_k^{(j)}} \rfloor -\frac{\tau}{T_k^{(j)}}|
\le  \frac{1}{\tau } 
$. 
Hence Condition \ref{cond:sqrt} holds and we are done.

\section{Summary and Conclusions}
In this paper we extend the classical FTRL algorithm to encompass time-varying constraints by leveraging, for the first time in this context, the seminal penalty method of \cite{zangwill67}.   We establish sufficient conditions for this new Penalised FTRL algorithm to achieve $O(\sqrt{t})$ regret and violation with respect to a strong benchmark $\hat{X}^{max}_t$ that expands significantly the previously-employed benchmarks in the literature.  This result matches the performance of the best existing primal-dual algorithms in terms of regret and constraint violation growth rates , while substantially extending the class of problems covered.    The key to this improvement lies in how the time-varying constraints are incorporated into the FTRL algorithm.  We conjecture that adopting a similar formulation with a primal-dual algorithm, namely using:
\begin{align*}
x_{t+1}&=\Pi_D(x_t - \alpha_t(\partial f_t(x_t) + \lambda_t \partial h_t(x_t))),\ 
\lambda_{t+1} = [\lambda_t + \alpha_th_t(x_{t+1}]^+
\end{align*}
where $h_t(x)=\frac{1}{t}\sum_{i=1}^tg_i(x_{t+1}$, would allow similar performance to be achieved by primal-dual algorithms as by FTRL but we leave this to future work.

\bibliographystyle{splncs04}

\bibliography{references}

\section*{Appendix A: Proofs}

\subsection{Proof of Lemma \ref{lem:zangwill}}
\begin{proof}
Firstly note that for feasible points $x\in X$ we have that $g^{(j)}(x)\le0$, $j=1,\cdots,m$ and so $F(x)=f(x)$.  By definition $f(x) \ge f^*=\inf_{x\in X} f(x)$ and so the stated result holds trivially for such points.   Now consider an infeasible point $w\notin X$.   Let $z$ be an interior point satisfying $g^{(j)}(z)<0$, $j=1,\cdots,m$; by assumption such a point exists.   Let ${\gamma}_0 = \frac{f^*-f(z)-1}{G}$.   It is sufficient to show that $F(w)> f^*$ for ${\gamma}\ge {\gamma}_0$ and $G=\max_{j\in\{1,\cdots,m\}}\{g^{(j)}(z)\}$.    

Let $v=\beta z + (1-\beta) w$ be a point on the chord between points $w$ and $z$, with $\beta\in(0,1)$ and $v$ on the boundary of $X$ (that is $g^{(j)}(v)\le 0$ for all $j=1,\cdots,m$ and $g^{(j)}(v)=0$ for at least one $j\in\{1,\cdots,m\}$).  Such a point $v$ exists since $z$ lies in the interior of $X$ and $w\notin X$.    Let $A:=\{j:j\in\{1,\cdots,m\},g^{(j)}(v)=0\}$ and $t(x):=f(x)+{\gamma}\sum_{j\in A} g^{(j)}(x)$.   Then $t(v)=f(v)\ge f^*$.   Also, by the convexity of $g^{(j)}(\cdot)$ we have that for $j\in A$ that $g^{(j)}(v) = 0 \le \beta g^{(j)}(z) + (1-\beta) g^{(j)}(w)$.  Since $g^{(j)}(z)<0$, it follows that $g^{(j)}(w)>0$.  Hence, $\sum_{j\in A}g^{(j)}(w) = \sum_{j\in A}\max\{0,g^{(j)}(w)\} \le  \sum_{j=1}^m\max\{0,g^{(j)}(w)\}$ and so $t(w) \le F(w,{\gamma})$. Now, observe that $t(z)= f(z)+{\gamma}\sum_{j\in A} g^{(j)}(z) \le f(z)+{\gamma}_0\sum_{j\in A} g^{(j)}(z)$ since $g^{(j)}(z)<0$ and ${\gamma}\ge {\gamma}_0$.  Hence,
\begin{align}
t(z) & \le f(z)+(f^*-f(z)-1)\frac{\sum_{j\in A} g^{(j)}(z)}{G} \label{eq:zangwillG}
\end{align}
Selecting $G$ such that $\frac{\sum_{j\in A} g^{(j)}(z)}{G}\ge 1$ then
$
t(z) \le f^*-1 \le t(v) -1
$.
So we have established that $f^*\le t(v)$, $t(z)\le t(v)-1$ and $t(w) \le F(w)$.  Finally, by the convexity of $t(\cdot)$, $t(v) \le \beta t(z) + (1-\beta) t(w)$.    Since $t(z)\le t(v)-1$ it follows that $t(v) \le \beta (t(v)-1) + (1-\beta) t(w)$ i.e. $t(v) \le -\frac{\beta}{1-\beta}+t(w)$.  Therefore $f^* \le -\frac{\beta}{1-\beta} + F(w)<F(w)$ as claimed.
\end{proof}

\end{document}